\renewcommand{\cite}{\citep}
\theoremstyle{plain}
\newtheorem{theorem}{Theorem}
\newtheorem{lemma}{Lemma}
\newtheorem{assumption}{Assumption}
\algnewcommand{\LeftComment}[1]{\Statex \(\triangleright\) #1}
\newcommand{\E}{\mathbb{E} \,} 
\newcommand{\R}{\mathbb{R} \,} 
\newcommand{\grad}{\nabla} 
\newcommand{\empE}{\hat{P}_\mathcal{S}} 
\newcommand{\stableTheta}{\Theta_{\lambda\text{-Stable}}}
\DeclareMathOperator{\BPTT}{BPTT} 
\newsavebox\CBox
\def\textBF#1{\sbox\CBox{#1}\resizebox{\wd\CBox}{\ht\CBox}{\textbf{#1}}}
\begin{document}

\title{Adaptively Truncating Backpropagation Through Time to Control Gradient Bias}

\author[1]{Christopher Aicher}
\author[2]{Nicholas J. Foti}
\author[1,2]{Emily B. Fox}
\affil[1]{Department of Statistics, University of Washington}
\affil[2]{Paul G. Allen School of Computer Science and Engineering, University of Washington}

\date{}
\maketitle

\begin{abstract}
Truncated backpropagation through time (TBPTT) is a popular method for learning in recurrent neural networks (RNNs) that saves computation and memory at the cost of bias by truncating backpropagation after a fixed number of lags. In practice, choosing the optimal truncation length is difficult: TBPTT will not converge if the truncation length is too small, or will converge slowly if it is too large. We propose an adaptive TBPTT scheme that converts the problem from choosing a temporal lag to one of choosing a tolerable amount of gradient bias. For many realistic RNNs, the TBPTT gradients decay geometrically \emph{in expectation} for large lags; under this condition, we can control the bias by varying the truncation length adaptively.  For RNNs with smooth activation functions, we prove that this bias controls the convergence rate of SGD with biased gradients for our non-convex loss. Using this theory, we develop a practical method for adaptively estimating the truncation length during training. We evaluate our adaptive TBPTT method on synthetic data and language modeling tasks and find that our adaptive TBPTT ameliorates the computational pitfalls of fixed TBPTT.
\vspace{0.2cm}
\noindent
Keywords:~\textit{recurrent neural networks, truncated backpropagation through time, adaptive algorithms, memory decay, non-convex optimziation}
\end{abstract}

\section{Introduction}
\label{sec:intro}

Recurrent neural networks (RNNs) are a popular method of processing sequential data for wide range of tasks such as language modeling, machine translation and reinforcement learning.

As with most deep neural networks, RNNs are typically trained with gradient descent.
These gradients can be calculated efficiently using \emph{backpropagation through time} (BPTT) which applies backpropagation to the unrolled network~\cite{werbos1990backpropagation}.
For long sequential data, BPTT is both computationally and memory intensive, hence approximations based on \emph{truncating} BPTT (TBPTT) have been proposed~\cite{williams1995gradient,sutskever2013training}.
However, this truncation causes the gradients to be biased.
When the truncation level is not sufficiently large, the bias introduced can
cause SGD to not converge.
In practice, a large truncation size is chosen heuristically (e.g. larger than the expected `memory' of the system) or via cross-validation.

Quantifying the bias due to truncation is difficult.
Depending on the parameters of the RNN, the gradient bounds for backpropagation either \emph{explode} or \emph{vanish}~\cite{bengio1994learning,pascanu2013difficulty}.
When the gradients vanish, the bias in TBPTT can be bounded.
Recent work has analyzed conditions for the parameters of the RNN to enforce this vanishing gradient condition~\cite{miller2018stable}.
However, these approaches are very restrictive and prevent the RNN from learning long-term dependencies.

To bound the bias in TBPTT, instead of restricting the parameters,
we formalize the heuristic assumption that the gradients in backpropagation should rapidly decay for steps beyond the `memory' of the RNN.
Specifically, we assume gradient bounds that decay exponentially \emph{in expectation} rather than uniformly.
Under this assumption, we show that the bias in TBPTT decays geometrically and also how to estimate an upper bound for this bias given a minibatch of backpropagated gradients.
Using this estimated upper bound, we propose an adaptive truncation scheme to control the bias.
In addition, we prove non-asymptotic convergence rates for SGD when the \emph{relative} bias of our gradients is bounded.
In particular, we show that when the relative bias, $\delta < 1$, SGD with biased gradients converges at the rate $(1-\delta)^{-1}$ compared to SGD with exact (unbiased) gradients.
In our experiments on synthetic and text data we see that
(i) our heuristic assumption holds empirically for these tasks,
(ii) our adaptive TBPTT method controls the bias, while fixed TBPTT does not,
and (iii) that our adaptive TBPTT method is competitive with or outperforms the optimal fixed TBPTT.

The paper is organized as follows.
First, we review generic RNNs and BPTT in Section~\ref{sec:background}.
Then, we develop our theoretical results in Section~\ref{sec:theory}.
Using this theory, we develop estimators for the bias and propose an adaptive TBPTT SGD scheme in Section~\ref{sec:method}.
Finally, we test our proposed adaptive TBPTT training scheme in Section~\ref{sec:exp} on both synthetic and language modeling data.

\section{Background}
\label{sec:background}
A generic RNN with inputs $x_t \in \R^{d_x}$ and hidden states $h_t \in \R^{d_h}$ at time step $t$ evolves as
\begin{equation}
\label{eq:general_RNN}
h_t = H(h_{t-1}, x_t; \theta)
\end{equation}
for some function $H:\R^{d_h \times d_x} \rightarrow \R^{d_h}$ with parameters $\theta \in \Theta$ (e.g, weights and biases) of the model.
This framework encompasses most popular RNNs. We now present some examples that we use later.

\paragraph{Simple RNN:}
A simple RNN is a linear map composed with a non-linear activation function $\rho$
\begin{equation*}
h_t = H(h_{t-1}, x_t, \theta) = \rho(W h_{t-1} + U x_t)\enspace,
\end{equation*}
where the weight matrices $W, U$ are the parameters.

\paragraph{Long Short-Term Memory (LSTM):}
A popular class of sequence models are LSTMs~\cite{hochreiter1997long}.
The hidden state consists of a pair of vectors $h_t = (c_t, \tilde{h}_t)$
\begin{align*}
f_t &= \sigma(W_f \tilde{h}_{t-1} + U_f x_t)\\
i_t &= \sigma(W_i \tilde{h}_{t-1} + U_i x_t)\\
o_t &= \sigma(W_o \tilde{h}_{t-1} + U_o x_t)\\
z_t &= \tanh(W_z \tilde{h}_{t-1} + U_z x_t)\\
c_t &= i_t \circ z_t + f_t \circ c_{t-1} \\
\tilde{h}_t &= o_t \cdot \tanh(c_t) \enspace,
\end{align*}
where the eight matrices $W_*,U_*$, for $* \in \{f,i,o,z\}$ are the parameters, $\sigma$ is the logistic function, and $\circ$ denotes elementwise multiplication.
LSTMs are examples of \emph{gated-RNNs} which capture more complex time dependence through the use of gate variables $f_t, i_t, o_t$.

\paragraph{Stacked RNNs:}
RNNs can be composed by \emph{stacking} the hidden layers. This allows different layers to learn structure at varying resolutions.
Each RNN-layer treats the output of the previous layer as its input.
Specifically, each layer $l = 1,\ldots,N_l$ is described as
\begin{equation*}
h_t^{(l)} = H_{(l)}(h_{t-1}^{(l)}, h_{t}^{(l-1)}, \theta_{(l)})\enspace,
\end{equation*}
where $h_{t}^{(0)} = x_t$.

\subsection{Training RNNs}
To train an RNN, we minimize a loss that can be decomposed into individual time steps $\mathcal{L}(\theta) = \frac{1}{T} \sum_{t = 1}^T \mathcal{L}_t(\theta)$.
For example, the individual loss at step $t$ may measure the accuracy of $h_t$ at predicting target outputs $y_t \in \R^{d_y}$.

Given $X=x_{1:T}$ and $\mathcal{L}_{1:T}$, gradient descent methods are typically used to train the RNN to minimize the loss.
To scale gradient descent for large $T$, we use stochastic gradient descent (SGD), which uses a random estimator $\hat{g}$ for the full gradient $g = \grad_\theta \mathcal{L}$.

We first consider estimating $g$ using the gradient of the loss at a random individual time step, $\mathcal{L}_s$, where $s$ is a \emph{random} index drawn uniformly from $\{1,\ldots,T\}$.
In Section~\ref{sec:method:tbpttstyles}, we discuss efficient ways of computing Eq.~\eqref{eq:truncbackprop} for multiple losses $\mathcal{L}_{s:s+t}$ simultaneously.

Unrolling the RNN, the gradient of $\mathcal{L}_s$ is
\begin{equation}
\label{eq:single_backprop}
\grad_\theta \mathcal{L}_s = \frac{\partial \mathcal{L}_s}{\partial \theta} + \sum_{k = 0}^s \frac{\partial\mathcal{L}_s}{\partial h_{s-k}} \cdot \frac{\partial h_{s-k}}{\partial \theta} ,
\end{equation}
which can be efficiently computed using \emph{backpropagation through time} (BPTT)~\cite{werbos1990backpropagation, williams1995gradient}.

When $s$ is large, unrolling the RNN is both computationally and memory prohibitive;
therefore in practice, the backpropagated gradients in Eq.~\eqref{eq:single_backprop} are truncated after $K$ steps~\cite{williams1995gradient,sutskever2013training}
\begin{equation}
\label{eq:truncbackprop}
\widehat{\grad}_\theta \mathcal{L}_s^{K} = \frac{\partial \mathcal{L}_s}{\partial \theta} + \sum_{k = 0}^K \frac{\partial\mathcal{L}_s}{\partial h_{s-k}} \frac{\partial h_{s-k}}{\partial \theta},
\end{equation}
where $K \ll T$.

Let $\hat{g}_K =  \widehat{\grad}_\theta \mathcal{L}_s^{K}$ be our stochastic gradient estimator for $g$ truncated at $K$ steps.
When $K = T$, $\hat{g}_T$ is an unbiased estimate for $g$;
however in general for $K < T$, the gradient estimator $\hat{g}_K$ is \emph{biased}, $\E[\hat{g}_K(\theta)] \neq g(\theta)$. 
In practice, the truncation length $K$ is chosen heuristically to be ``large enough'' to capture the memory in the underlying process, in hope that the bias of $\hat{g}_K(\theta)$ does not affect the convergence of SGD.
In general, there are no guarantees on the size of this bias or the convergence of the overall optimization for fixed $K$.

\subsection{Vanishing and Exploding Bounds}
Let $\| \cdot \|$ denote the spectral norm for matrices and Euclidean norm for vectors.

To analyze the bias of $\hat{g}_K$, we are interested in the behavior of $\frac{\partial \mathcal{L}_t}{\partial h_{t-k}}$ for large $k$.
\citet{pascanu2013difficulty} observed
\begin{equation}
\label{eq:gradient_problem}
\frac{\partial \mathcal{L}_t}{\partial h_{t-k}} = \frac{\partial\mathcal{L}_t}{\partial h_t}\prod_{r = 1}^k \frac{\partial h_{t-r+1}}{\partial h_{t-r}} \enspace.
\end{equation}
In particular, the repeated product of Jacobian matrices $\frac{\partial h_t}{\partial h_{t-1}}$ cause Eq.~\eqref{eq:gradient_problem} to tend to \emph{explode} to infinity or \emph{vanish} to zero.
When $\theta$ has an exploding gradient, then the bias of $\hat{g}_{K}$ is unbounded.
When $\theta$ has a vanishing gradient, then the bias of $\hat{g}_{K}$ is small; however
if the gradient decays too rapidly, the RNN cannot learn long-term dependences~\cite{bengio1994learning,pascanu2013difficulty, miller2018stable}.
In practice, LSTMs and other gated-RNNs have been seen to work in a middle ground where (for appropriate $\theta$ and inputs $x_{1:T}$) the gate variables prevent the gradient from exploding or vanishing~\cite{hochreiter1997long, belletti2018factorized}.
However, \emph{gradient bounds}, based on the Jacobian $\|\frac{\partial h_t}{\partial h_{t-1}} \| \leq \lambda$, either explode or vanish
\begin{equation}
\label{eq:gradient_bound}
\left\|\frac{\partial \mathcal{L}_t}{\partial h_{t-k}}\right\| \leq \left\| \frac{\partial \mathcal{L}_t}{\partial h_t} \right\| \cdot \lambda^k \enspace.
\end{equation}
In light of Eq.~\eqref{eq:gradient_bound}, several approaches have been proposed in the literature to restrict $\theta$ to control $\lambda$. 

\emph{Unitary} training methods have been proposed to restrict $\theta$ such that $\lambda \approx 1$ for all $\theta$, but do not bound the bias of the resulting gradient~\cite{arjovsky2016unitary,jing2017tunable,vorontsov2017orthogonality}.

\emph{Stable} or \emph{Chaos-Free} training methods have been proposed to restrict $\theta$ such that
 $\lambda < 1$~\cite{laurent2016recurrent,miller2018stable}.
In particular, \citet{miller2018stable}
call an RNN $H$ \emph{stable} for parameters $\theta$ if it is a contraction in $h$, that is
\begin{equation}
\label{eq:lipschitz_restrict}
\sup_{\substack{h, h' \in \R^{d_h} \\ x \in \R^{d_x}}} \frac{\| H(h, x, \theta) - H(h', x, \theta)\|}{\| h - h' \|} \leq \lambda < 1
\end{equation}
and call an RNN $H$ \emph{data-dependent stable} if the supremum over Eq.~\eqref{eq:lipschitz_restrict} is restricted to \emph{observed} inputs $x \in X$.
Let $\stableTheta$ be the set of parameters $\theta$ satisfying Eq.~\eqref{eq:lipschitz_restrict}
and $\stableTheta^X$ be the set of parameters $\theta$ satisfying the data-dependent version.

 \citet{miller2018stable} show that if $\theta \in \stableTheta$ the RNN gradients have an exponential forgetting property (as $\| \frac{\partial H}{\partial h} \| < \lambda$),
 which prevents the RNN from learning long-term dependences.
 We desire conditions on $\theta$ where we can bound the bias, 
but are less restrictive than Eq.~\eqref{eq:lipschitz_restrict}.

\section{Theory}
\label{sec:theory}


In this section, we consider bounding the bias in TBPTT when $\theta$ satisfies a \emph{relaxation} of the contraction restriction Eq.~\eqref{eq:lipschitz_restrict}.
Under this condition and a bound on $\|\partial h_t/\partial \theta\|$, we show that both the \emph{absolute bias} and \emph{relative bias} are bounded and decay geometrically for large $K$.
Finally, we prove the convergence rate of SGD for gradients with bounded relative bias.
Full proofs of theorems can be found in the Supplement.

\subsection{Geometric Decay for Large Lags}
To reduce notation, we define $\phi_k = \| \frac{\partial{L}_s}{\partial h_{s-k}}\|$ to be the gradient norm of loss $\mathcal{L}_s$ at time $s$ with respect to the hidden state $k$ lags in the past.
Note that $\phi_k$ is a random variable as $s$ is a random index.

Our relaxation of Eq.~\eqref{eq:lipschitz_restrict} is to assume the norm of the backpropagated gradient $\phi_{k}$ decays geometrically, \emph{on average} for large enough lags $k$.
More formally,
\begin{assumption}
\label{assump:geometric_decay}
For $\theta$ fixed,
there exists $\beta \in (0,1)$ and $\tau \geq 0$ such that
\begin{equation}
\label{eq:geometric_decay_assump}
\E [\phi_{k+1}] \leq \beta \cdot \E [\phi_{k}] \enspace, \text{ for all } k \geq \tau
\end{equation}
\end{assumption}
This generalizes the vanishing gradient condition to hold \emph{in expectation}.


To contrast \ref{assump:geometric_decay} with $\theta \in \stableTheta$,
we observe that if $\theta \in \stableTheta$ then the gradient norms $\phi_k$ must \emph{uniformly} decay exponentially
\begin{equation}
\label{eq:stable_decay}
\phi_{k+1} \leq \lambda \cdot \phi_{k} \text{ for all } k \enspace.
\end{equation}
Eq.~\eqref{eq:geometric_decay_assump} is less restrictive than Eq.~\eqref{eq:stable_decay} as $\phi_{k+1} \leq \beta \cdot \phi_k$ only occurs for $k > \tau$ and in expectation rather than uniformly.
Denote the set of $\theta$ that satisfy \ref{assump:geometric_decay} with $\beta, \tau$ for inputs $X = x_{1:T}$ as $\Theta_{\beta, \tau}^X$.
Then, we have $\stableTheta \subseteq \stableTheta^X \subset \Theta_{\beta, \tau}^X$ for $(\beta,\tau) = (\lambda, 0)$.
Therefore \ref{assump:geometric_decay} is a more general condition.

For illustration, we present two examples where $\theta \in \Theta_{\beta, \tau}^X$ but $\theta \notin \stableTheta^X$.
\paragraph{Nilpotent Linear RNN:}
Consider a simple RNN with linear activation $h_t = W h_{t-1} + U x_t$ where $W$ is a \emph{nilpotent} matrix with index $k$, that is $W^k = 0$. Then $\partial h_t/\partial h_{t-k} = W^k = 0$ hence $(W,U) \in \Theta_{0, k}^X$; however the norm $\| \partial h_t/\partial h_{t-k} \| = \|W\|$ can be arbitrarily large, thus $(W,U) \notin \stableTheta$.
For this example, although $H$ is not stable, $k$-repeated composition $ h_t = (H \circ \cdots \circ H)(h_{t-k}, x_{t-k+1:t})$ is stable.

\paragraph{Unstable RNN with Resetting:}
Consider a generic RNN with $\theta$ chosen such that $H$ is unstable, Lipschitz with constant $\lambda > 1$, but with a \emph{resetting} property $H(h,x) = 0$, whenever $x \in \mathcal{X}_0$.
Then,
\begin{equation*}
\E_S[\phi_{k+1}] \leq  \Pr(x_s, \ldots, x_{s-k+1} \notin \mathcal{X}_0) \cdot \lambda \cdot \E_S[\phi_{k}]
\end{equation*}
Although the RNN is unstable, if the probability $\{x_s, \ldots, x_{s-k+1}\}$ \emph{hits} the resetting set $\mathcal{X}_0$ is greater than $1-\beta\lambda^{-1}$ for sufficiently large $k$ and for some $\beta < 1$,
then
$\Pr(x_s, \ldots, x_{s-k+1} \notin \mathcal{X}_0) \cdot \lambda \leq \beta$ and therefore $\theta \in \Theta_{\beta, k}$ with $\theta \notin \stableTheta^X$.
For this example, although $H$ is not stable, properties of input distribution $x_t$ can lead to $H$ have vanishing gradients in \emph{expectation}.

\subsection{TBPTT Bias Bounds}
\label{sec:tbpttbiasbounds}
We now show the usefulness of assumption \ref{assump:geometric_decay}, that is if $\theta \in \Theta_{\beta,\tau}^X$, then the bias of TBPTT is bounded and decays geometrically in $K$.
To do so, we additionally assume the partial derivatives of the hidden state with respect to the parameters is bounded.
\begin{assumption}
\label{assump:bound_hiddenstate_gradients}
For $\theta$ fixed, there exists $M < \infty$ such that
$\| \tfrac{\partial H(x_t, h_t, \theta)}{\partial \theta} \| \leq M$
for all $t$.
\end{assumption}
For most typical RNNs, where $\theta$ are weights and biases, if the inputs $x_t$ and $h_t$ are bounded then $M$ can be bounded and assumption \ref{assump:bound_hiddenstate_gradients} holds.

We now show both the bias of TBPTT is guaranteed to decay geometrically for large $K$.
\begin{theorem}[Bias Bound]
\label{thm:bias_bound}
If \ref{assump:geometric_decay} and \ref{assump:bound_hiddenstate_gradients} hold for $\theta$,
then the absolute bias is upper bounded as
\begin{equation*}
\| \E[\hat{g}_K(\theta)] - g(\theta) \| \leq \mathcal{E}(K,\theta) \enspace,
\end{equation*}
where
\begin{equation*}
\mathcal{E}(K,\theta) = \begin{cases}
M \cdot \E\left[ \sum_{k = K+1}^{\tau-1} \phi_{k} + \frac{\phi_{\tau}}{1-\beta}\right], & K < \tau \\
M \cdot \E[\phi_{\tau}] \cdot \frac{\beta^{K - \tau}}{1-\beta}, & K \geq \tau 
\end{cases} .
\end{equation*}
And the relative bias is upper bounded by
\begin{equation*}
\frac{\| \E[\hat{g}_K(\theta)] - g(\theta) \|}{\| g(\theta) \|} \leq \Delta(K,\theta) \enspace,
\end{equation*}
where
\begin{equation*}
\Delta(K,\theta) = \frac{\mathcal{E}(K,\theta)}{\max_{k \leq K} \| \E\hat{g}_k(\theta) \| - \mathcal{E}(k,\theta)} \enspace.
 \end{equation*}
when the denominator is positive.
\end{theorem}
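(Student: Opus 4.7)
The plan is to first unravel the bias using the explicit forms of $\hat{g}_K$ and $g$ given in Eqs.~\eqref{eq:single_backprop} and \eqref{eq:truncbackprop}. Since $g(\theta) = \E[\grad_\theta \mathcal{L}_s]$ with $s$ uniform on $\{1,\ldots,T\}$, the two formulas differ exactly in the tail of the sum, so
\[
g(\theta) - \E[\hat{g}_K(\theta)] \;=\; \E\!\left[\sum_{k=K+1}^{s} \frac{\partial \mathcal{L}_s}{\partial h_{s-k}}\frac{\partial h_{s-k}}{\partial \theta}\right].
\]
I would push the norm inside the expectation via Jensen, apply the triangle inequality term-by-term, use submultiplicativity $\|AB\|\leq\|A\|\,\|B\|$ on each summand, and invoke Assumption~\ref{assump:bound_hiddenstate_gradients} to replace $\|\partial h_{s-k}/\partial \theta\|$ by $M$. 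Extending $\phi_k=0$ for $k>s$ then yields the clean intermediate bound
\[
\|\E[\hat{g}_K(\theta)] - g(\theta)\| \;\leq\; M \sum_{k=K+1}^{\infty} \E[\phi_k].
\]

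Next I would control this tail using Assumption~\ref{assump:geometric_decay}. Iterating $\E[\phi_{k+1}]\leq\beta\E[\phi_k]$ for $k\geq\tau$ gives $\E[\phi_{\tau+j}]\leq\beta^j\E[\phi_\tau]$, and one splits by cases. For $K\geq\tau$, the tail is a pure geometric series summing to at most $\E[\phi_\tau]\beta^{K-\tau}/(1-\beta)$. For $K<\tau$, I would break the sum at $k=\tau$: the prefix $\sum_{k=K+1}^{\tau-1}\E[\phi_k]$ is carried through as an explicit finite sum (Assumption~\ref{assump:geometric_decay} gives no information on these terms), while the remaining tail $\sum_{k=\tau}^{\infty}\E[\phi_k]$ is again bounded by $\E[\phi_\tau]/(1-\beta)$. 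These two cases recover the piecewise definition of $\mathcal{E}(K,\theta)$.

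For the relative-bias half, I would apply the reverse triangle inequality together with the just-proved absolute bound at every intermediate truncation. Concretely, for each $k\leq K$,
\[
\|g(\theta)\| \;\geq\; \|\E[\hat{g}_k(\theta)]\| - \|\E[\hat{g}_k(\theta)] - g(\theta)\| \;\geq\; \|\E[\hat{g}_k(\theta)]\| - \mathcal{E}(k,\theta).
\]
Maximizing the right-hand side over $k\in\{0,\ldots,K\}$ gives the tightest available lower bound on $\|g(\theta)\|$, and dividing $\mathcal{E}(K,\theta)$ by it yields exactly $\Delta(K,\theta)$, valid whenever the denominator is positive.

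The routine pieces are Jensen, submultiplicativity, and summing a geometric series. The main obstacle I anticipate is the $K<\tau$ case of the absolute bound: since Assumption~\ref{assump:geometric_decay} provides no decay information for $k<\tau$, those early terms genuinely must be kept as an explicit finite sum rather than collapsed, and one has to carefully match the split-sum indexing against the piecewise definition of $\mathcal{E}$. The smaller conceptual point is the $\max_{k\leq K}$ in $\Delta$: applying the reverse triangle inequality only at $k=K$ would suffice for a valid bound, but taking the maximum over all $k\leq K$ gives a strictly tighter (and freely available) denominator, since $\|\E[\hat{g}_k]\| - \mathcal{E}(k,\theta)$ need not be monotone in $k$.
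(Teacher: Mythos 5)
Your proposal is correct and follows essentially the same route as the paper's proof: Jensen plus the term-by-term triangle inequality and Assumption (A-2) to reduce the bias to $M\sum_{k>K}\E[\phi_k]$, iteration of (A-1) to sum the geometric tail (with the explicit prefix sum when $K<\tau$), and the reverse triangle inequality for the relative bound. Your observation about maximizing the denominator over all $k\le K$ is apt — the paper's written proof only applies the reverse triangle inequality at $k=K$, so your version actually matches the theorem's stated $\Delta(K,\theta)$ more precisely.
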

Note that $\mathcal{E}(K, \theta)$ decays geometrically for $K \geq \tau$ and therefore $\Delta(K,\theta)$ decays geometrically for large enough $K$ (as the denominator is monotone increasing).

Using this upper bound, we define
$\kappa(\delta, \theta)$ to be the \textit{smallest truncation length} for the parameters $\theta$ with guaranteed relative bias less than $\delta$.
That is
\begin{equation}
\label{eq:Kdeltatheta_dfn}
\kappa(\delta, \theta) = \min_{K} \{ \Delta(K, \theta) < \delta\} \enspace.
\end{equation}
The geometric decay in $\Delta(K,\theta)$ ensures $\kappa(\delta, \theta)$ is small.

Finally, we define the \emph{adaptive} TBPTT gradient estimator to be $\hat{g}(\theta) = \hat{g}_{\kappa(\delta,\theta)}(\theta)$, which truncates BPTT after $\kappa(\delta, \theta)$ steps and therefore has relative bias less than $\delta$.


\subsection{SGD with Biased Gradients}
\label{sec:biased_sgd_bound}
We use stochastic gradient descent (SGD) to learn $\theta$
\begin{equation}
\label{eq:sgd}
\theta_{n+1} = \theta_n - \gamma_n \cdot \hat{g}(\theta_n) \enspace,
\end{equation}
where $\{\gamma_n\}_{n=1}^N$ are stepsizes.
When using SGD for non-convex optimization, such as in training RNNs,
we are interested in convergence to stationary points of $\mathcal{L}$,
where $\theta$ is called a $\epsilon$-stationary point of $\mathcal{L}(\theta)$ if $\| g(\theta)\|^2 \leq \epsilon$ \cite{nesterov2013introductory}.

Usually the stochastic gradients $\hat{g}$ are assumed to be unbiased;
however during training RNNs with TBPTT, the truncated gradients are biased.
Based on Section~\ref{sec:tbpttbiasbounds},
we consider the case when $\hat{g}(\theta)$ has a bounded relative bias,
\begin{equation}
\label{eq:bounded_rel_bias}
\| \E [\hat{g}(\theta)] - g(\theta) \| \leq \delta \|g(\theta)\|, \enspace \forall \theta \enspace.
\end{equation}
such as for our adaptive estimator $\hat{g}(\theta) = \hat{g}_{\kappa(\delta, \theta)}(\theta)$.

For gradients with bounded relative bias $\delta < 1$ (and the additional assumptions below), \citet{poljak1973pseudogradient} and \citet{bertsekas1989parallel} prove that the averaged SGD sequence $\theta_n$ \emph{asymptotically} converges to a stationary point when the stepsizes are $\gamma_n \propto n^{-1}$.
However, \emph{non-asymptotic} convergence rates are also of interest, as they are useful in practice to understand the non-asymptotic performance of the algorithm.
\citet{ghadimi2013stochastic} prove non-asymptotic convergence rates for SGD with unbiased gradients.
We extend these results to the case of SGD with biased gradients.
Similar results were previously investigated by \citet{chen2018stochastic}, but with weaker bounds\footnote{They consider the case of consistent but biased estimators, where the gradients are uniformly bounded and the relative error is controlled with high probability (rather than in expectation). See the Supplement for additional discussion.}.

For our SGD convergence bound we need two additional assumptions.
\begin{assumption}
\label{assump:smooth_loss} The gradients are $L$-Lipschitz
\begin{equation*}
\| g(\theta) - g(\theta') \| \leq L \| \theta - \theta' \|, \quad \forall \theta, \theta'\enspace.
\end{equation*}
\end{assumption}
This assumption holds for generic RNNs as long as the activation functions are smooth (e.g. $\sigma$ or $\tanh$, but not RELU).
Second, we assume that the variance of our stochastic gradient estimator is uniformly bounded.
\begin{assumption}
\label{assump:stoch_var_bound}
 $\E\| \hat{g}(\theta) - \E\hat{g}(\theta) \|^2 \leq \sigma^2$ for all $\theta$.
\end{assumption}
We can now present our main theorem regarding convergence rates of SGD with biased gradients.
\begin{theorem}[SGD with Biased Gradients]
\label{thm:bias_sgd}
If the relative bias of $\hat{g}(\theta)$ is bounded by $\delta < 1$ for all $\theta_n$ and \ref{assump:smooth_loss} and \ref{assump:stoch_var_bound} both hold,
then SGD, Eq.~\eqref{eq:sgd}, with stepsizes $\gamma_n \leq \frac{1-\delta}{L (1+\delta)^2}$ satisfies
\begin{equation}
\min_{n = 1,\ldots,N+1} \| g(\theta_{n}) \|^2 \leq \frac{2 D_\mathcal{L} + L \sigma^2 \sum_{n = 1}^N \gamma_n^2}{(1-\delta) \sum_{n = 1}^N \gamma_n} \enspace,
\end{equation}
where $D_\mathcal{L} = (\mathcal{L}(\theta_1) - \min_{\theta^*} \mathcal{L}(\theta^*))$.

In particular, the optimal fixed stepsize for $N$ fixed is
$\gamma_n = \sqrt{2D_\mathcal{L}/(NL\sigma^2)}$, for which the bound is
\begin{equation}
\min_{n = 1,\ldots,N+1} \| g(\theta_{n}) \|^2 \leq \frac{1}{1-\delta} \cdot \sqrt{\frac{8 D_\mathcal{L} L \sigma^2}{N}} \enspace.
\end{equation}
\end{theorem}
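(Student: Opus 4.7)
The plan is to adapt the standard non-convex SGD analysis of \citet{ghadimi2013stochastic} to a gradient estimator with bounded relative bias rather than an unbiased one. The starting point is the descent lemma for $L$-smooth functions: from \ref{assump:smooth_loss} and the update rule $\theta_{n+1} = \theta_n - \gamma_n \hat{g}(\theta_n)$, one obtains $\mathcal{L}(\theta_{n+1}) \leq \mathcal{L}(\theta_n) - \gamma_n \langle g(\theta_n), \hat{g}(\theta_n)\rangle + \tfrac{L\gamma_n^2}{2}\|\hat{g}(\theta_n)\|^2$.

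Next I would take conditional expectation given $\theta_n$ and bound the two new terms separately using the relative bias. Writing $b(\theta_n) = \E[\hat{g}(\theta_n)] - g(\theta_n)$ with $\|b(\theta_n)\| \leq \delta \|g(\theta_n)\|$, Cauchy–Schwarz gives $\E\langle g(\theta_n),\hat{g}(\theta_n)\rangle = \|g(\theta_n)\|^2 + \langle g(\theta_n), b(\theta_n)\rangle \geq (1-\delta)\|g(\theta_n)\|^2$. For the second moment, decomposing $\E\|\hat{g}\|^2 = \|\E\hat{g}\|^2 + \E\|\hat{g} - \E\hat{g}\|^2$ and invoking \ref{assump:stoch_var_bound} together with the triangle inequality $\|\E\hat{g}\| \leq (1+\delta)\|g\|$ yields $\E\|\hat{g}(\theta_n)\|^2 \leq (1+\delta)^2\|g(\theta_n)\|^2 + \sigma^2$.

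Substituting these bounds back and using the stepsize condition $\gamma_n \leq (1-\delta)/[L(1+\delta)^2]$ to absorb the positive $\|g\|^2$ term into the negative one (halving its coefficient) produces the one-step recursion $\E[\mathcal{L}(\theta_{n+1}) \mid \theta_n] \leq \mathcal{L}(\theta_n) - \tfrac{(1-\delta)\gamma_n}{2}\|g(\theta_n)\|^2 + \tfrac{L\sigma^2\gamma_n^2}{2}$. Telescoping $n = 1, \ldots, N$, taking total expectation, bounding $\mathcal{L}(\theta_1) - \E[\mathcal{L}(\theta_{N+1})] \leq D_\mathcal{L}$, and finally lower-bounding the weighted average by $\min_n \|g(\theta_n)\|^2 \cdot \sum_n \gamma_n$ gives the first inequality. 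The optimal-stepsize corollary then follows by plugging the constant choice $\gamma_n = \sqrt{2D_\mathcal{L}/(NL\sigma^2)}$ into the bound and simplifying (one also verifies it respects the upper bound on $\gamma_n$ for $N$ large enough).

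The main obstacle is careful constant tracking around the bias terms: one must check that Cauchy–Schwarz is tight enough to yield the clean $(1-\delta)$ factor on the inner product, that expanding $\|g+b\|^2$ really produces $(1+\delta)^2 \|g\|^2$ rather than something larger after using the variance bound, and that the stepsize restriction genuinely preserves a $(1-\delta)/2$ coefficient on $\|g(\theta_n)\|^2$ in the recursion. These steps are precisely what drive the $1/(1-\delta)$ inflation of the final rate relative to the unbiased case; everything after the one-step recursion is a routine telescoping argument.
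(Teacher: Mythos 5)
Your proposal is correct and follows essentially the same route as the paper's proof: the descent lemma from $L$-smoothness, the Cauchy--Schwarz bound $\E\langle g,\hat{g}\rangle \geq (1-\delta)\|g\|^2$, the second-moment bound $\E\|\hat{g}\|^2 \leq (1+\delta)^2\|g\|^2 + \sigma^2$ via the bias-variance decomposition, and the stepsize condition to retain a $\tfrac{(1-\delta)\gamma_n}{2}\|g(\theta_n)\|^2$ term before telescoping. The two inner-product and second-moment bounds are exactly the paper's Lemmas on $\hat{g}$, so no further comparison is needed.
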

When $\delta = 0$, Thm.~\ref{thm:bias_sgd} reduces to the smooth non-convex convergence rate bound~\cite{ghadimi2013stochastic}.
The price of biased gradients in SGD is the factor $(1-\delta)^{-1}$.

In practice, when the constants in Thm.~\ref{thm:bias_sgd} are unknown (e.g. $D_\mathcal{L}$),
we can use a decaying stepsize $\gamma_n = \gamma \cdot n^{-1/2}$.
Once $\gamma_n \leq \frac{1-\delta}{L (1+\delta)^2}$, Thm.~\ref{thm:bias_sgd} implies 
\begin{equation}
\min_{n = 1, \ldots, N+1} \| g(\theta_n) \|^2  = \mathcal{O}\left(\frac{1}{1-\delta}\cdot\frac{\log n}{\sqrt{n}}\right) \enspace.
\end{equation}

Thm.~\ref{thm:bias_sgd} provides bounds of the form $\min_{n} \| g(\theta_n) \|^2 < \epsilon$, but does not say which iterate is best.
This can be accounted for by using a random-stopping time~\cite{ghadimi2013stochastic} or using variance reduction methods such as SVRG~\cite{reddi2016stochastic}.
We leave these extensions as future work. 
In our experiments, we select $\theta_n$ by evaluating performance on a validation set.

\paragraph{What happens when~\ref{assump:geometric_decay} is violated?}
We do not restrict $\theta$ to $\Theta_{\beta,\tau}^X$ during training,
therefore whenever  $\theta_n \notin \Theta_{\beta, \tau}$ (or in practice when $\kappa(\delta, \theta_n)$ is larger than our computational budget allows), we are not able to construct $\hat{g}(\theta_n)$ such that the relative bias is bounded.
In these cases, we use $\hat{g} = \hat{g}_{K_{\max}}$ for some large truncation $K_{\max}$.
Although $\hat{g}_{K_{\max}}$ does not satisfy \ref{assump:geometric_decay}, we assume the stationary points of interest $\theta^*$ are in $\Theta_{\beta, \tau}^X$ and that eventually $\theta_n$ ends in a neighborhood of a $\theta^*$ that is a subset of $\Theta_{\beta, \tau}$ where our theory holds.

The advantage of an adaptive $\kappa(\delta, \theta)$ over a fixed $K$ is that it ensures convergence when possible (relative bias $\delta < 1$), while being able to get away with using a smaller $K$ during optimization for computational speed-ups.

\section{Adaptive TBPTT}
\label{sec:method}
The theory in Sec.~\ref{sec:theory} naturally suggests an adaptive TBPTT algorithm that we summarize in Alg.~\ref{alg:adaptive_tbptt}.
Our method selects a truncation level by periodically estimating $\kappa(\delta, \theta)$ over the course of SGD.
In this section we describe our implementation of adaptive TBPTT and how to estimate the quantities necessary to choose the truncation level\footnote{Code for our implementation and experiments is available at \url{https://github.com/aicherc/adaptive_tbptt}.}.

\begin{algorithm}[h]
   \caption{Adaptive TBPTT}
   \label{alg:adaptive_tbptt}
\begin{algorithmic}[1]
  \STATE {\bfseries Input:} initial parameters $\theta_0$, initial truncation $K_0$, stepsizes $\gamma_{1:N}$, relative bias tolerance $\delta$, batch size $S$, window size $R$
  \FOR{$n = 0, \ldots, N-1$}
  \LeftComment{Compute adaptive truncation}
  \STATE Sample random minibatch $\mathcal{S}$ of size $S$
  \STATE Calculate $\phi_k$ using $\BPTT(R,1)$ for Eq.~\eqref{eq:gradnorm_minibatch}
  \STATE Calculate $\empE[\phi_{k}]$ for $k \in [0, R]$ using Eq.~\eqref{eq:empirical_expectation}
  \STATE Estimate $\hat{\beta}$ using Eq.~\eqref{eq:max_estimate_beta} or \eqref{eq:ols_estimate_beta}
  \STATE Set $K_n = \hat{\kappa}(\delta, \theta)$ using Eq.~\eqref{eq:23}
  \LeftComment{Update $\theta$ with streaming gradients}
  \FOR{$m = 1, \ldots, T/K_n$}
  \STATE Get minibatch $\mathcal{S}_m$ defined in Eq.~\eqref{eq:streaming_minibatch}
  \STATE Calculate $\hat{g}(\theta_n) =\BPTT(2K_n, K_n)$ on $\mathcal{S}_m$
  \STATE Set $\theta_n = \theta_n - \gamma_n \cdot \sqrt{K_n} \cdot \hat{g}(\theta_n)$
  \ENDFOR

  \STATE Set $\theta_{n+1} = \theta_n$
  \ENDFOR
  \STATE Return $\theta_{1:N}$.
\end{algorithmic}
\end{algorithm}

\subsection{Computing TBPTT Gradients}
\label{sec:method:tbpttstyles}
Following~\citet{williams1995gradient}, we denote $\mathrm{BPTT}(K_1,K_2)$ to be truncated backpropagation for $K_2$ losses $\mathcal{L}_{s-K_2+1:s}$  backpropagated over $K_1$ steps, that is
\begin{equation}
\label{eq:backprop}
\BPTT(K_1,K_2) =  \frac{1}{K_2} \sum_{k' = 0}^{K_2-1} \sum_{k = s-t}^{K_1} \frac{\partial \mathcal{L}_{s-k'}}{\partial h_{s-k}} \cdot \frac{\partial h_{s-k}}{\partial \theta}
\end{equation}
which can be computed efficiently using the recursion
\begin{equation}
b_k = \begin{cases}
b_{k-1} \cdot\dfrac{\partial h_{s-k+1}}{\partial h_{s-k}} + \dfrac{\partial \mathcal{L}_{s-k}}{\partial h_{s-k}}  & \text{ if } k < K_2 \\[12pt]
b_{k-1} \cdot\dfrac{\partial h_{s-k+1}}{\partial h_{s-k}} & \text{ if } k \geq K_2
\end{cases}
\end{equation}
with $\BPTT(K_1, K_2) = \frac{1}{K_2} \sum_{k = 0}^{K_1} b_k \cdot \frac{\partial h_{s-k}}{\partial \theta}$.
It is important to include the normalization factor $\frac{1}{K_2}$, to ensure regularizations (such as dropout or weight decay) do not change for different values of $K_2$.

When $K_1 = K$ and $K_2 = 1$, then we obtain $\BPTT(K,1) = \hat{g}_{K}$.
However, individually calculating $S$ samples of $\hat{g}_{K}$ using $\BPTT(K,1)$ takes $\mathcal{O}(JK)$ computation; whereas the same gradients (plus extra lags) can be computed using $\BPTT(J+K,K)$ in $\mathcal{O}(J+K)$ time.
In practice a popular default setting is to set $K_1 = K_2 = K$ (as done in TensorFlow~\cite{abadi2016tensorflow});
however this overweights small lags, as the $k$-th loss is only backpropagated only $K-k$ steps.
To ensure all $K$ losses are backpropagated at least $K$ steps, in our experiments we use $\BPTT(2K,K)$.

We also scale the gradient updates $\gamma_n \hat{g}$ by $\sqrt{K_n}$ to account for the decreasing variance in $\BPTT(2K, K)$ as $K$ increases.
If we did not scale the gradient updates, then as $K$ increases, the resulting increase in the computational cost per step is not offset.

To handle the initialization of $h_{s-k}$ in Eq.~\eqref{eq:backprop}, we partition $\{1,\ldots, T\}$ into $T/K$ contiguous subsequences 
\begin{equation}
\label{eq:streaming_minibatch}
\mathcal{S}_m = [(m-1)*K+1, \ldots ,m*K] \enspace.
\end{equation}
By sequentially processing $\mathcal{S}_m$ in order, the final hidden state of the RNN on $\mathcal{S}_m$ can be used as the input for the RNN on $\mathcal{S}_{m+1}$.



\subsection{Estimating the Geometric Decay Rate}
A prerequisite for determining our adaptive truncation level is estimating the geometric rate of decay in Eq.~\eqref{eq:geometric_decay_assump}, $\beta$, and the lag at which it is valid, $\tau$.
We consider the case where we are given a batch of gradient norms $\phi_k$
for $s$ in a random minibatch $\mathcal{S}$ of size $|\mathcal{S}| = S$ that are backpropagated over a window $k \in [0,R]$
\begin{equation}
\label{eq:gradnorm_minibatch}
\left\{\phi_{k} = \left\| \frac{\partial \mathcal{L}_s}{\partial h_{s-k}} \right\| \, : \, s \in \mathcal{S}, k \in [0,R] \right\} \enspace.
\end{equation}
The gradient norms $\phi_k$ can be computed iteratively in parallel using the same architecture as truncated backpropagation $\BPTT(R,1)$.
The window size $R$ should be set to some large value.
This should be larger than the $\tau$ of the optimal $\theta^*$.
The window size $R$ can be large, since we only estimate $\beta$ periodically.

We first focus on estimating $\beta$ given an estimate $\hat{\tau} > \tau$.
We observe that if $\hat{\tau} \geq \tau$ and \ref{assump:geometric_decay} holds,
then
\begin{equation}
\label{eq:bound_for_beta}
\log \beta \geq 
\max_{k' > k \geq \hat{\tau}}
\frac{\log \E[\phi_{k}] - \log \E[\phi_{k'}]}{k-k'} \enspace.
\end{equation}
Eq.~\eqref{eq:bound_for_beta} states that $\log \beta$ bounds the slope of $\log\E[\phi_{t}]$ between any pair of points larger than $\tau$.

Using Eq.~\eqref{eq:bound_for_beta}, we propose two methods for estimating $\beta$.
We replace the expectation $\E$ with the empirical approximation based on the minibatch $\mathcal{S}$
\begin{equation}
\label{eq:empirical_expectation}
\E[f(s)] \approx \empE[f(s)] := \frac{1}{\| \mathcal{S} \|} \sum_{s \in \mathcal{S}} f(s).
\end{equation}

Substituting the empirical approximation into Eq.~\eqref{eq:bound_for_beta} and restricting the points to $[\hat{\tau}, R]$, we obtain
\begin{equation}
\label{eq:max_estimate_beta}
\hat{\beta} = \log \left[
\max_{\hat{\tau} \leq k < k' \leq R}
\frac{\log \empE[\phi_{k}] - \log \empE[\phi_{k'}]}{k-k'}  \right]\enspace.
\end{equation}
Because this estimate of $\beta$ is based on the maximum it is sensitive to noise:
a single noisy pair of $\empE[\phi_{k}]$ completely determines $\hat\beta$ when using Eq.~\eqref{eq:max_estimate_beta}.
To reduce this sensitivity, we could use a $(1-\alpha)$ quantile instead of strict max; however, to account for the noise in $\empE[\phi_{k}]$, we use linear regression, which is a weighted-average of the pairs of slopes
\begin{equation}
\label{eq:ols_estimate_beta}
\tilde{\beta} = \log \left[
\frac{\sum_{k,k'} (\log \empE[\phi_{k}] - \log \empE[\phi_{k'}](k - k')]}{ \sum_{k,k'} (k-k')^2}  \right]\enspace.
\end{equation}
This estimator is not guaranteed to be consistent for an upper bound on $\beta$ (i.e. as $| \mathcal{S} | \rightarrow T$, $\tilde{\beta} \not\geq \beta$); however, we found Eq.~\eqref{eq:ols_estimate_beta} performed better in practice.

The correctness and efficiency of both methods depends on the size of both the minibatch $\mathcal{S}$ and the window $[\hat{\tau}, R]$. 
Larger minibatches improve the approximation accuracy of $\empE$.
Large windows $[\hat{\tau}, R]$ are necessary to check \ref{assump:geometric_decay}, but also lead to additional noise.

In practice, we set $\hat{\tau}$ to be a fraction of $R$;
in our experiments we did not see much variability in $\hat{\beta}$ once $\hat{\tau}$ was sufficiently large, therefore we use $\hat{\tau} =\frac{9}{10} R$.


\subsection{Estimating the Truncation Level}
To estimate $\kappa(\delta, \theta)$ in Eq.~\eqref{eq:Kdeltatheta_dfn}, we obtain empirical estimates for the absolute and relative biases of the gradient.

Given $\hat{\beta}, \hat{\tau}$,
our estimated bound for the absolute bias is
\begin{equation}
\hat{\mathcal{E}}(K, \theta) = \begin{cases}
\hat{M} \cdot \empE\left[ \sum_{k = K+1}^{\hat\tau-1} \phi_{k} + \frac{\phi_{S,\hat\tau}}{1-\hat\beta}\right], & K < \hat\tau \\
\hat{M} \cdot \empE[\phi_{\hat\tau}] \cdot \frac{\hat\beta^{K - \hat\tau}}{1-\hat\beta}, & K \geq \hat\tau
\end{cases} 
\end{equation}
where we estimate an upper-bound $\hat{M}$ for $M$ by keeping track of an upper-bound for
$h_{s,t}$ and $x_{s,t}$ during training. 

Similarly, our estimated bound for the relative bias is
\begin{equation}
\label{eq:est_relative_bias}
\hat{\Delta}(K, \theta) = \frac{\hat{\mathcal{E}}(K,\theta)}{\| \max_{k \leq K} \empE[\hat{g}_k(\theta)] \| - \hat{\mathcal{E}}(k,\theta)}
\end{equation}
In our implementation, we make the simplifying assumption that $ \|\empE[\hat{g}_K/\hat{M}]\| \approx \empE\|\sum_{k = 0}^K \phi_k\|$, which allows us to avoid calculating $\hat{M}$.

Our estimate for $K$ with relative error $\delta$ is
\begin{equation}
\label{eq:23}
\hat{\kappa}(\delta, \theta) = \min_{K\in [K_{\min}, K_{\max}]} \{ \hat{\Delta}(K, \theta) < \delta\} \enspace,
\end{equation}
where $K_{\min}$ and $K_{\max}$ are user-specified bounds.


\subsection{Runtime Analysis of Algorithm \ref{alg:adaptive_tbptt}}
Our adaptive TBPTT scheme, Algorithm~\ref{alg:adaptive_tbptt}, consists of estimating the truncation length (lines 3-7) and updating the parameters with SGD using TBPTT (lines 8-12); whereas a fixed TBPTT scheme skips lines 3-7.

Updating the parameters with $\BPTT(2K, K)$ streaming over $\{1, \ldots, T\}$ (lines 8-12), takes $\mathcal{O}(T/K \cdot K) = \mathcal{O}(T)$ time and memory\footnote{As $K$ increases, $\BPTT(2K,K)$ takes more time per step $\mathcal{O}(K)$, but there are less steps per epoch $\mathcal{O}(T/K)$; hence the overall computation time is $\mathcal{O}(T)$}.
The additional computational cost of adaptive TBPTT (lines 3-7) is dominated by the calculation of gradient norms $\phi_k$, Eq.~\eqref{eq:gradnorm_minibatch}, using $\BPTT(R,1)$, which takes $\mathcal{O}(R)$ time and memory.
If we update the truncation length $\alpha$ times each epoch, then the total cost for adaptive TBPTT is $\mathcal{O}(T+ \alpha R)$.
Therefore, the additional computation cost is negligible when $\alpha R << T$.

In Algorithm~\ref{alg:adaptive_tbptt}, we only update $K$ once per epoch ($\alpha = 1$); however more frequent updates ($\alpha < 1$) allow for less stale estimates at additional computational cost.


\section{Experiments}
\label{sec:exp}
In this section we demonstrate the advantages of our adaptive TBPTT scheme (Algorithm~\ref{alg:adaptive_tbptt}) in both synthetic copy and language modeling tasks.
For each task, we compare using fixed TBPTT and our adaptive TBPTT to train RNNs with SGD.
We evaluate performance using perplexity (PPL)
on the test set, for the $\theta_n$ that achieve the best PPL on the validation set.
To make a fair comparison, we measure PPL against the number of data passes (epochs) used in training (counting the data used to estimate $\hat{\kappa}(\delta, \theta_n)$).
We also evaluate the relative bias of our gradient estimates $\delta$ and truncation length $K$.
For our experiments with SGD, we use a fixed learning rate chosen to be the largest power of $10$ such that SGD did not quickly diverge.
In section~\ref{sec:geometric_decay}, we demonstrate that the best $\theta_n$ appear to satisfy \ref{assump:geometric_decay} (e.g., $\theta \in \Theta_{\beta,\lambda}$) by presenting the gradient norms $\E[\phi_k]$ against lag $k$.
In the Supplement we present additional experiments, applying our adaptive TBPTT scheme to temporal point process modeling~\cite{du2016recurrent}.

\subsection{Synthetic Copy Experiment}
\label{sec:exp-copy}

\begin{figure*}[tb]
\centering
\begin{minipage}[c]{.32\textwidth}
    \centering
    \includegraphics[width=\textwidth]{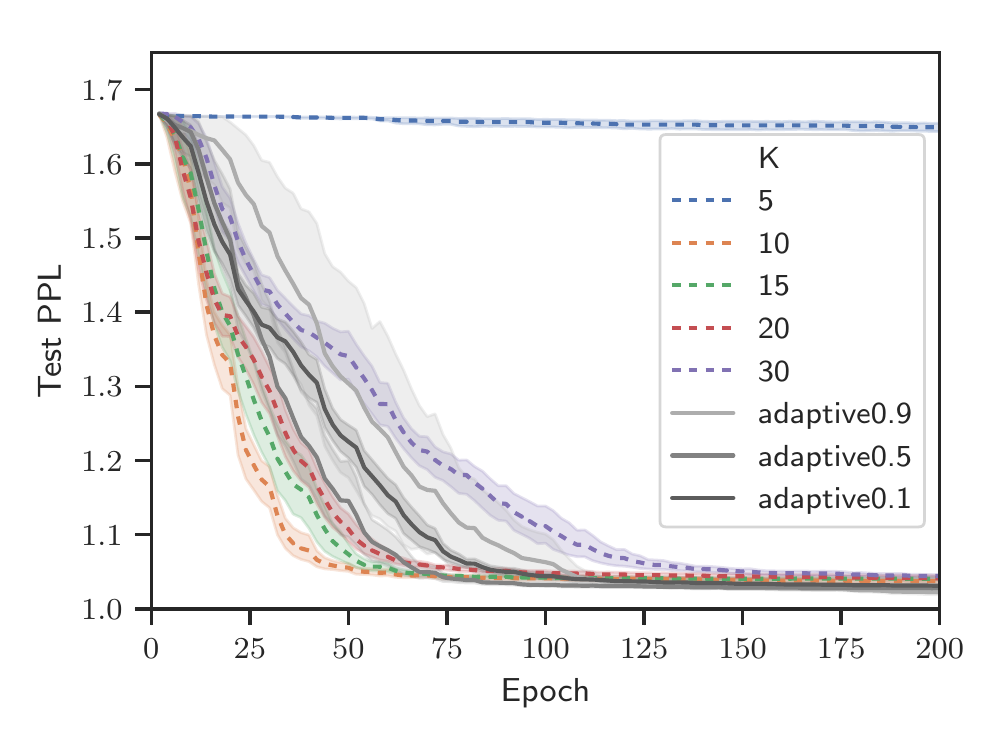}
\end{minipage}
\begin{minipage}[c]{.32\textwidth}
    \centering
    \includegraphics[width=\textwidth]{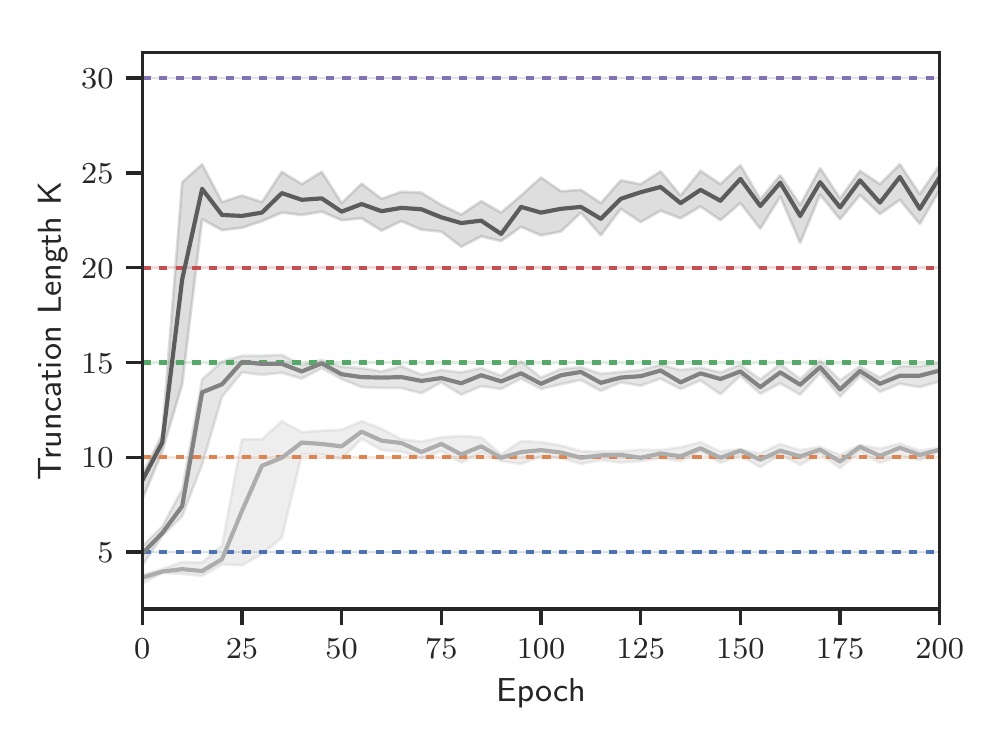}
\end{minipage}
\begin{minipage}[c]{.32\textwidth}
    \centering
    \includegraphics[width=\textwidth]{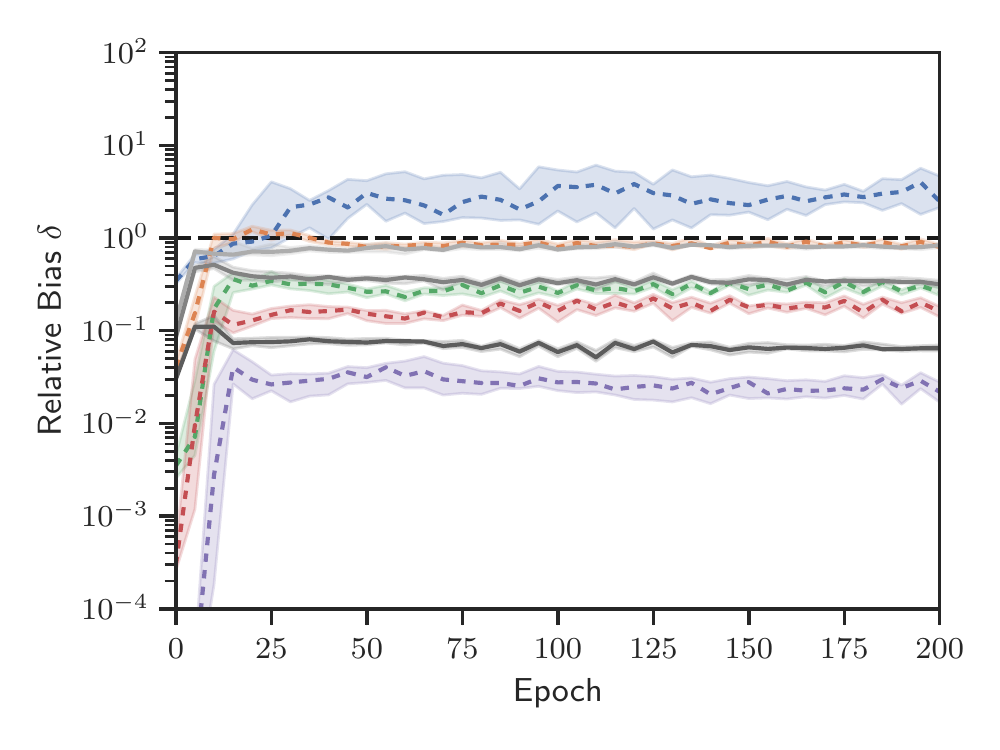}
\end{minipage}

\begin{minipage}[c]{.32\textwidth}
    \centering
    \includegraphics[width=\textwidth]{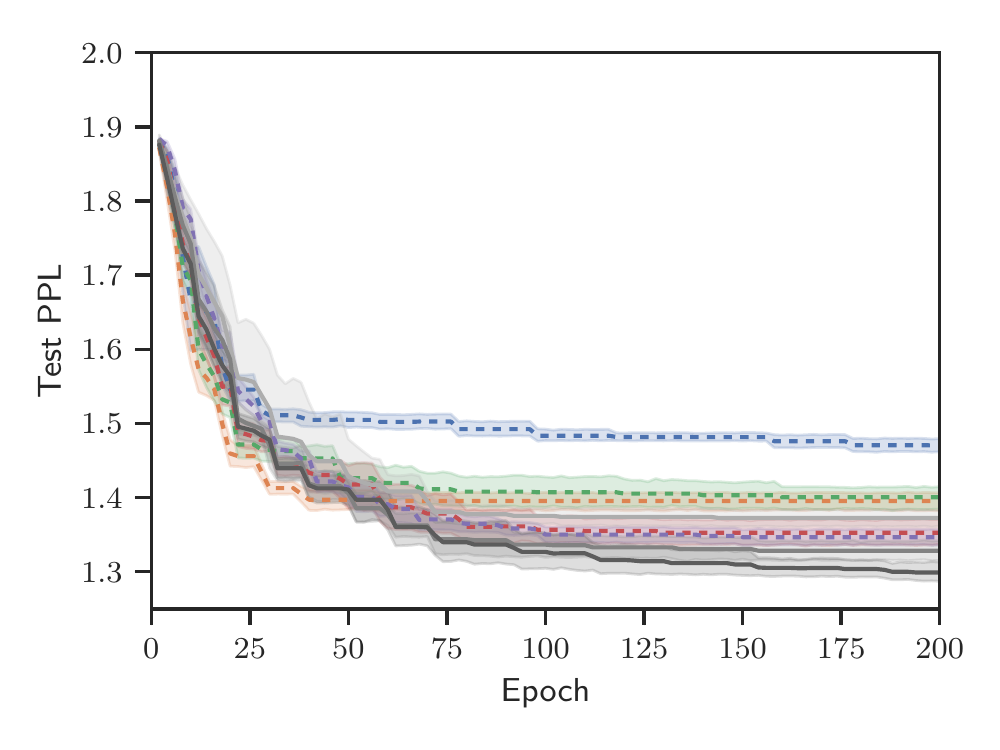}
\end{minipage}
\begin{minipage}[c]{.32\textwidth}
    \centering
    \includegraphics[width=\textwidth]{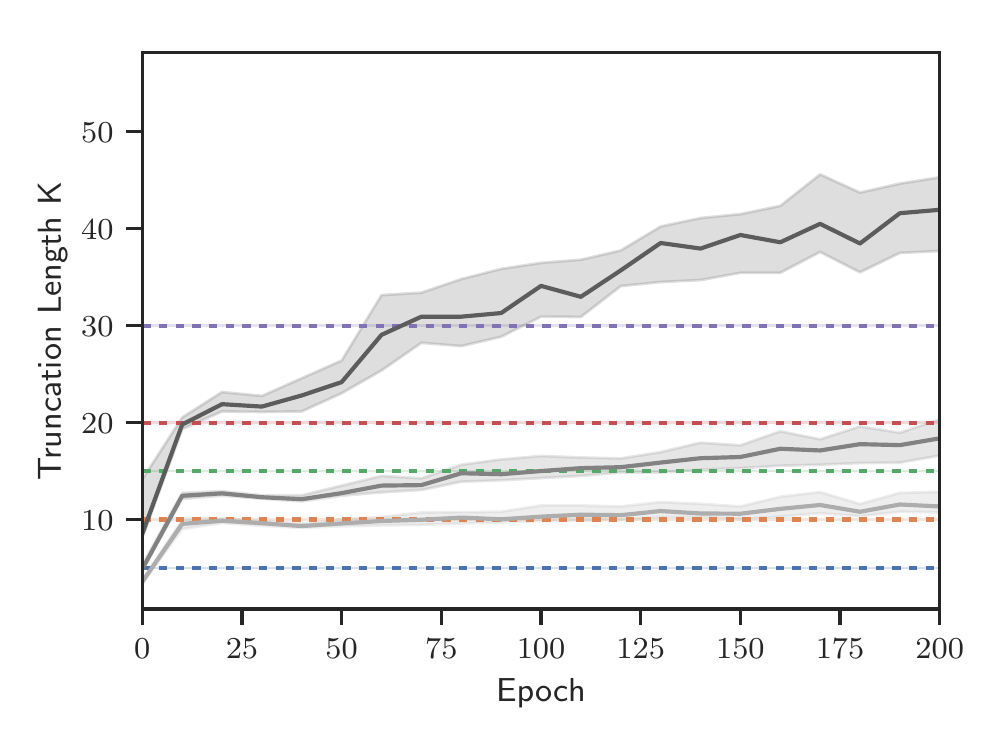}
\end{minipage}
\begin{minipage}[c]{.32\textwidth}
    \centering
    \includegraphics[width=\textwidth]{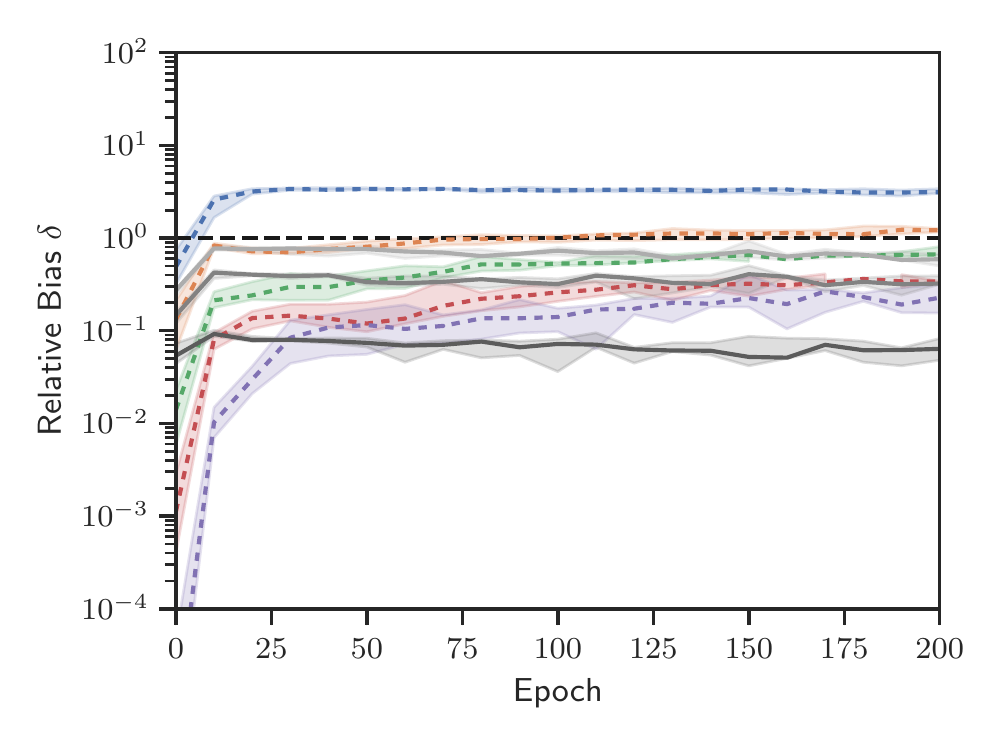}
\end{minipage}

\caption{Synthetic Copy Results: (left) Test PPL vs epoch, (center) $\hat{\kappa}(\delta, \theta_n)$ vs epoch (right) $\hat\delta(K)$ vs epoch. (Top) fixed $m = 10$, (bottom) variable $m \in [5, 10]$.
Error bars are 95 percentiles over 50 minibatch estimates.
Solid dark lines are our adaptive TBPTT methods, dashed colored lines are fixed TBPTT baselines. We see that the adaptive method converges in fewer epochs (left), while maintaining a controlled relative bias $\delta(K) \leq \delta$ (right).
}
\label{fig:copy}
\end{figure*}
The `copy' synthetic task is used to test the RNN's ability to remember information seen earlier~\cite{hochreiter1997long,arjovsky2016unitary,jing2017tunable, vorontsov2017orthogonality}.
We consider a special variant of this task from~\citep{mujika2018approximating}, which we review now.
Let $A = \{a_i\}$ be a set of symbols, where the first $I$ represent data and remaining two represent ``blank'' and ``start recall''.
Each input consists of sequence of $m$ random data symbols followed by
the ``start recall'' symbol and $m-1$ more blanks.
The desired output consists of $m$ blanks followed by the original sequence of $m$ data symbols.
For example when $m = 6$ and $A = \{A, B, C, -, \#\}$
\begin{verbatim}
       Input:   ACBBAB#-----
       Output:  ------ACBBAB
\end{verbatim}
We concatenate multiple of such inputs to construct $x_{1:T}$ and multiple outputs to construct $y_{1:T}$.
We expect that TBPTT with $K > m$ will perform well, while $K < m$ will perform poorly.

In our experiments, 
we consider both a \emph{fixed} $m = 10$ and a \emph{variable} $m$ drawn uniformly over $[5,10]$.
For the variable copy length experiment, we expect TBPTT to degrade more gradually as $K$ decreases.
We set $I = 6$ and use training data of length $T = 256,000$ and validation and test data of length $T=64,000$.

\paragraph{Model and Training Setup}
We train separate $2$-layer LSTMs with a embedding input layer and a linear output-layer to both the fixed- and variable- copy tasks by minimizing the cross-entropy loss.
The embedding dimension is set to $6$ and hidden and cell dimensions of the LSTM layers are set to $50$.
We train $\theta$ using SGD
using a batchsize of $S=64$ and a fixed learning rate of $\gamma = 1.0$
with fixed TBPTT $K \in [5, 10, 15, 20, 30]$ and our adaptive TBPTT method $\delta \in [0.9, 0.5, 0.1]$, $W = 100$, $K_0 = 15$ and $[K_{\min}, K_{\max}] = [2,100]$.
We set $W = 100$, $K_0 = 15$ and $[K_{\min}, K_{\max}] = [2,100]$ for Algorithm~\ref{alg:adaptive_tbptt}.

\paragraph{Results}
Figure~\ref{fig:copy} shows the results for the synthetic copy task.
The left figures present the test set PPL against the number of data epochs used in training.
We see that adaptive methods (black solid lines) perform as well as or better than the best fixed methods (colored dashed).
In particular, TBPTT with $K = 5$ (blue) does not learn how to accurately predict the outputs as $K$ is too small $5 = K \leq m = 10$.
On the other hand, $K = 30$ (purple) takes much longer to converge.
The center figures show how $\hat{\kappa}(\delta, \theta_n)$ evolves for the adaptive TBPTT methods over training.
The adaptive methods initially use small $K$ as the backpropagated gradient vanish rapidly in the early epochs;
however as the adaptive TBPTT methods learn $\theta$ the necessary $K$ for a relative error of $\delta$ increases until they eventually level off at $\kappa(\delta, \theta_N)$.
The right figures show the estimated relative bias $\delta$ of the gradient estimates during training.
We see that the adaptive methods are able to roughly control $\delta$ to be less than their target values,
while the fixed methods initially start with low $\delta$ and before increasing and leveling off.
Additional figures for the validation PPL and tables of numerical values can be found in the Supplement.

\subsection{Language Modeling Experiments}
\label{sec:exp-lm}
\begin{figure*}[h]
\centering
\begin{minipage}[c]{.32\textwidth}
    \centering
    \includegraphics[width=\textwidth]{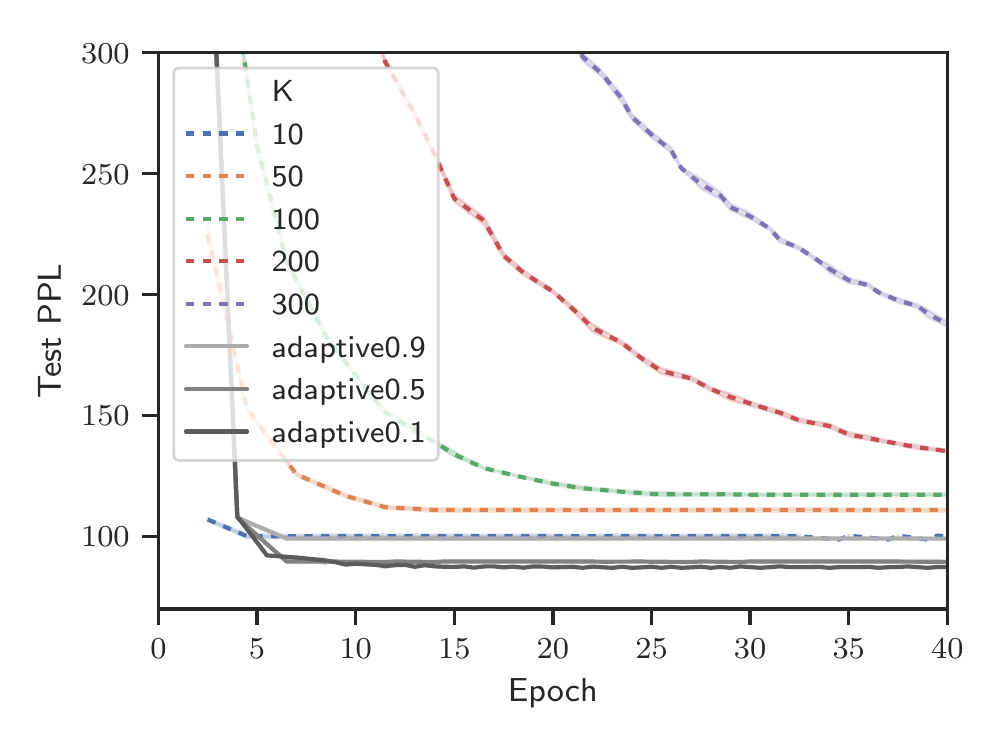}
\end{minipage}
\begin{minipage}[c]{.32\textwidth}
    \centering
    \includegraphics[width=\textwidth]{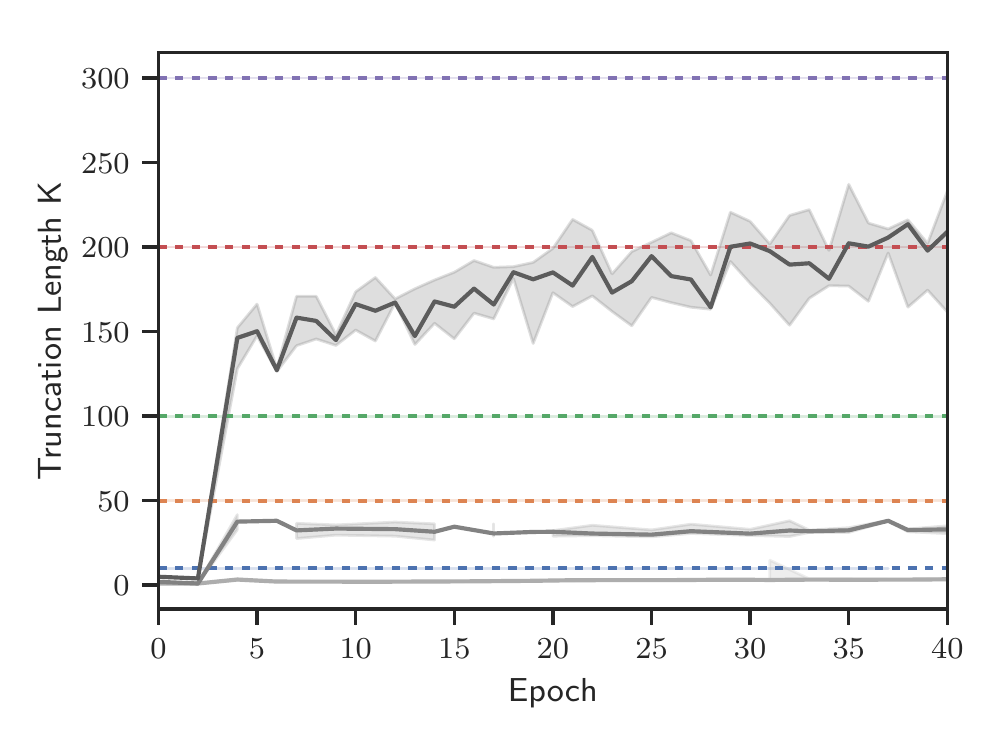}
\end{minipage}
\begin{minipage}[c]{.32\textwidth}
    \centering
    \includegraphics[width=\textwidth]{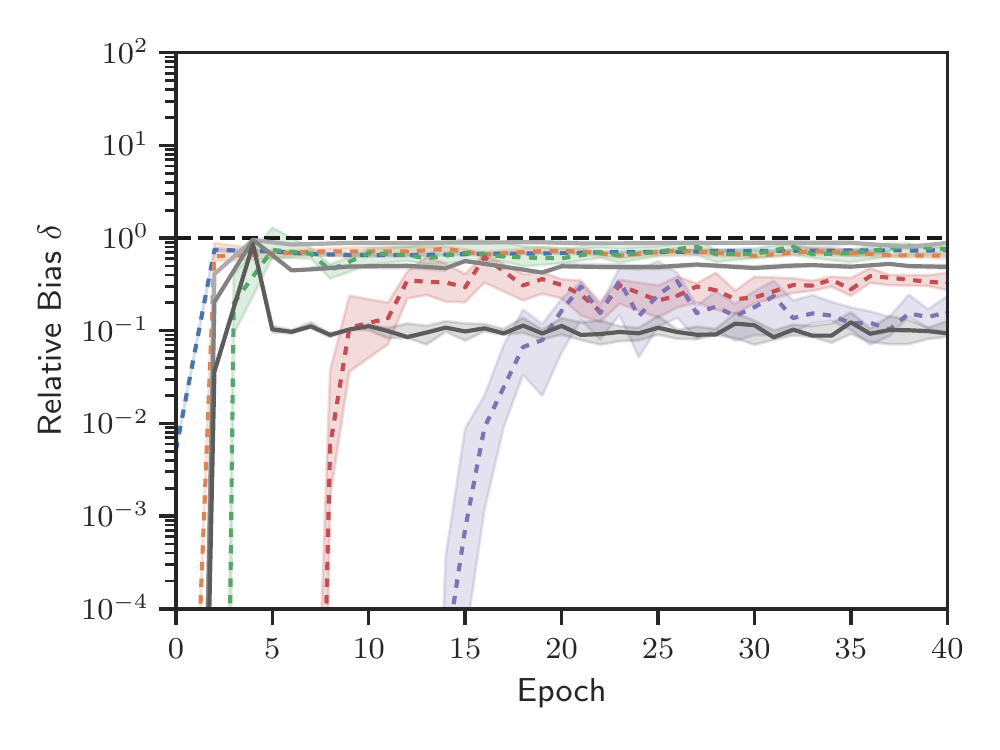}
\end{minipage}

\begin{minipage}[c]{.32\textwidth}
    \centering
    \includegraphics[width=\textwidth]{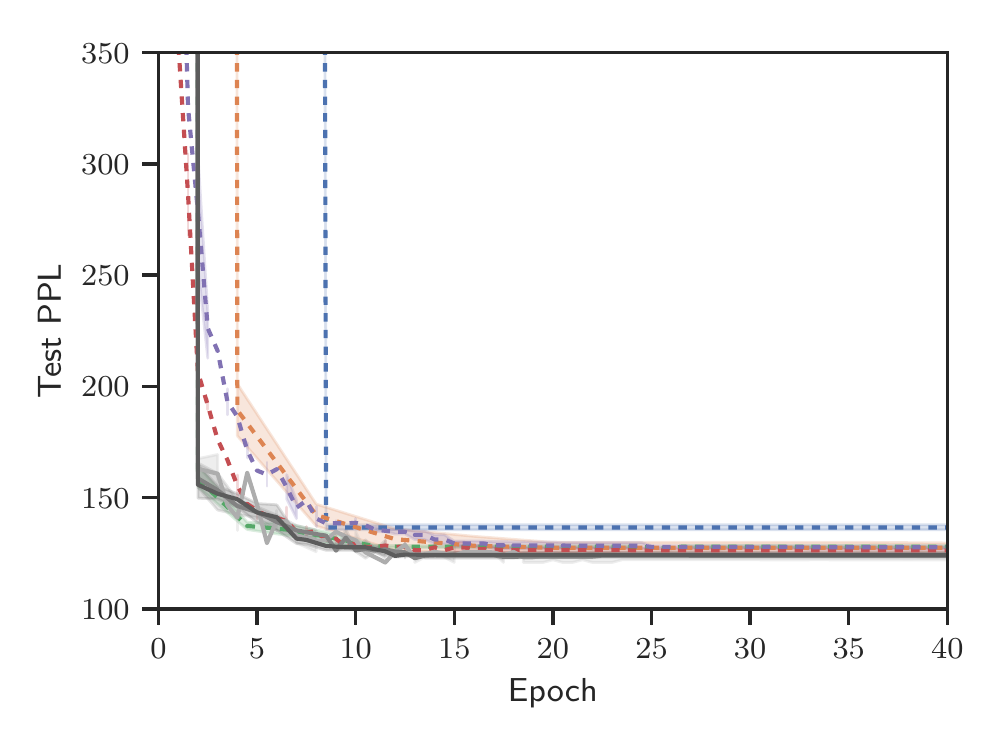}
\end{minipage}
\begin{minipage}[c]{.32\textwidth}
    \centering
    \includegraphics[width=\textwidth]{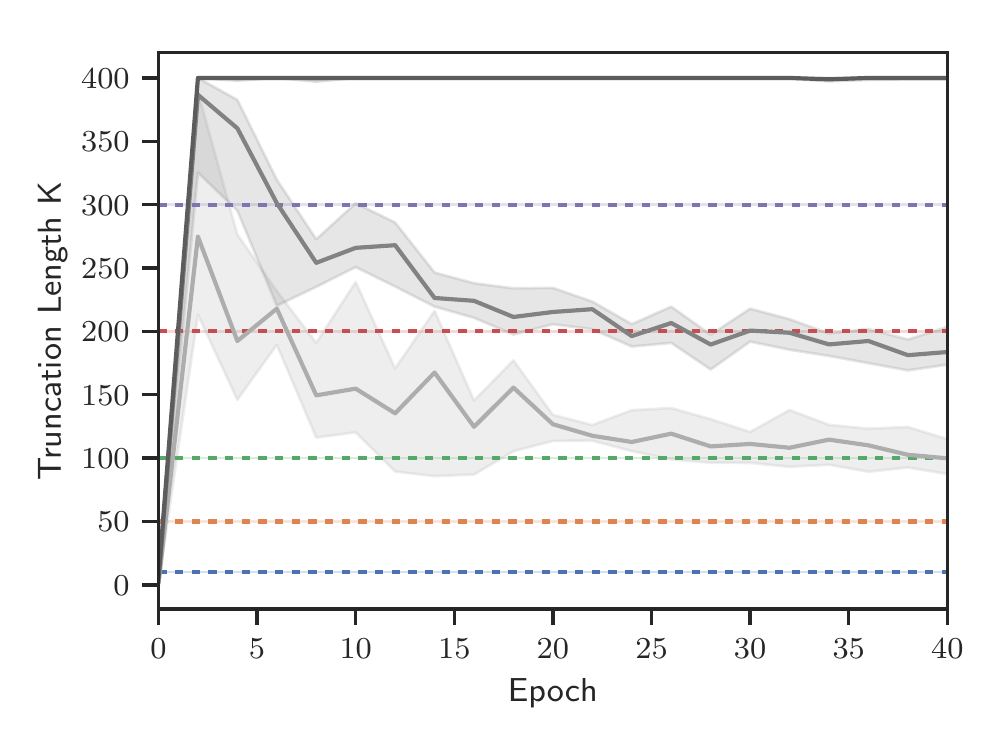}
\end{minipage}
\begin{minipage}[c]{.32\textwidth}
    \centering
    \includegraphics[width=\textwidth]{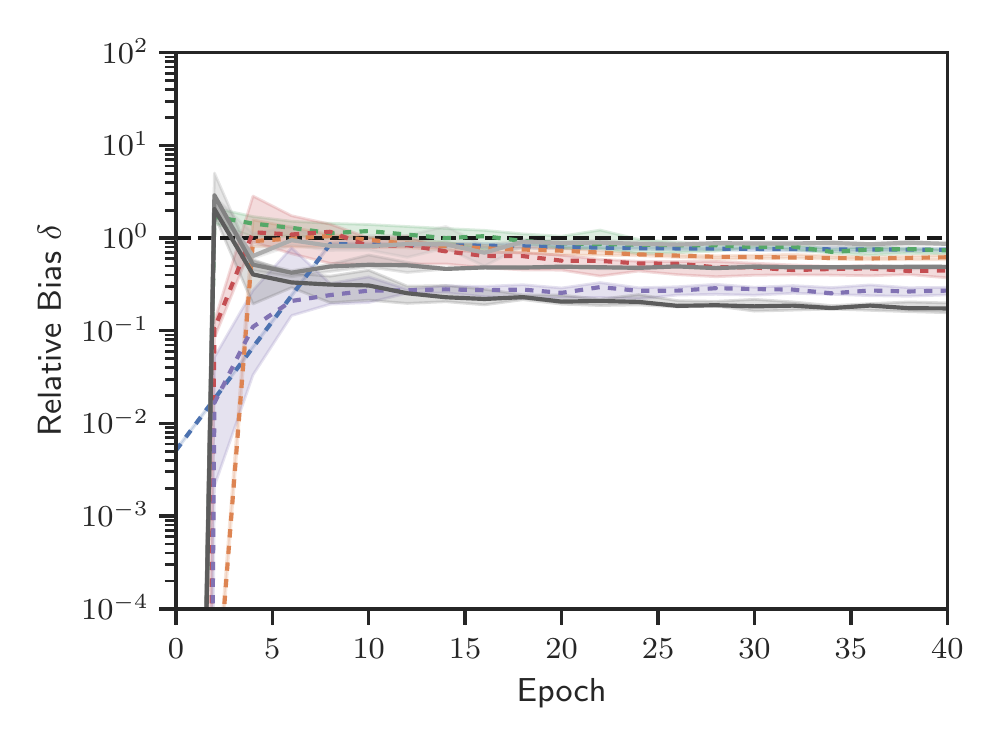}
\end{minipage}
\caption{Language Modeling Results. (left) Test PPL vs epoch, (center) $\hat{\kappa}(\delta, \theta_n)$ vs Epoch, (right) $\hat\delta(K)$ vs epoch. (Top) PTB (bottom) Wiki2.
Error bars are 95 percentiles over 50 minibatch estimates.
Solid dark lines are our adaptive TBPTT methods, dashed colored lines are fixed TBPTT baselines.
Our adaptive methods are competitive with the best fixed $K$ methods, while controlling the relative bias.
}
\label{fig:lm_results}
\end{figure*}
We also evaluate performance on language modeling tasks, where the goal is to predict the next word.
We train and evaluate models on both the Penn Treebank (PTB) corpus~\cite{marcus1993building, mikolov2010recurrent} and the Wikitext-2 (Wiki2) corpus~\cite{merity2016pointer}.
The PTB corpus contains about 1 millon tokens with a truncated vocabulary of 10k.
The Wikitext-2 is twice the size of PTB and with a vocabulary of 30k.

\paragraph{Model and Training Setup}
For both the PTB and Wiki2 corpus,
we train $1$-layer LSTMs with a word embedding layer input and a linear output layer.
The embedding dimension, hidden state, and cell state dimensions are all $900$ for the PTB following~\cite{lei2017training} and $512$ for the Wiki2 corpus following~\cite{miller2018stable}.
We use a batchsize of $S=32$ and a fixed learning rate of $\gamma = 10$
for fixed TBPTT $K \in [10, 50, 100, 200, 300]$ and our adaptive TBPTT method $\delta \in [0.9, 0.5, 0.1]$.
We set $W = 400$, $K_0 = 100$ and $[K_{\min}, K_{\max}] = [10,400]$ for Algorithm~\ref{alg:adaptive_tbptt}.

\paragraph{Results}
Figure~\ref{fig:lm_results} (left) presents the test PPL and $K_n$ against the training epoch for both language modeling tasks.
We again see that our adaptive methods are competitive with the best fixed $K$ methods, while controlling the relative bias.
From the $K(\delta,\theta_n)$ vs epoch figures, our adaptive method seems to quickly converge to a constant.
Therefore, on the real language data task, we have transformed the problem of selecting a fixed-$K$ to choosing a continuous parameter $\delta \in (0,1)$.
Additional figures for the validation PPL and tables of numerical values can be found in the Supplement.

\subsection{Empirically Checking \ref{assump:geometric_decay}}
\label{sec:geometric_decay}
\begin{figure}[tb]
\centering
\begin{minipage}[c]{.23\textwidth}
    \centering
    \includegraphics[width=\textwidth]{./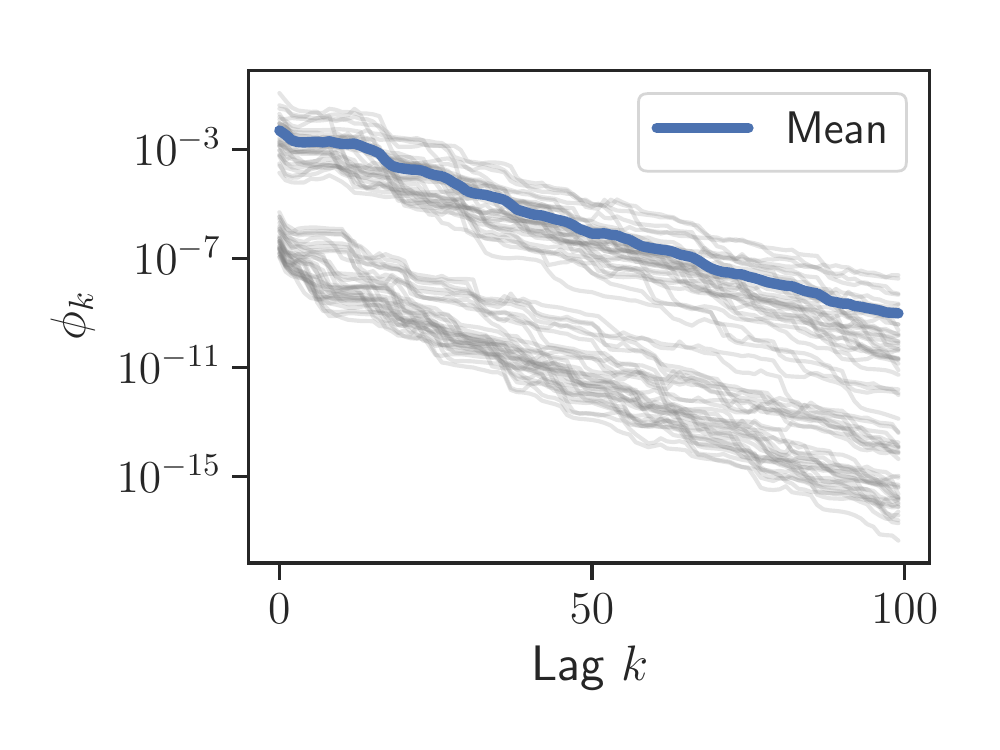}
\end{minipage}
\begin{minipage}[c]{.23\textwidth}
    \centering
    \includegraphics[width=\textwidth]{./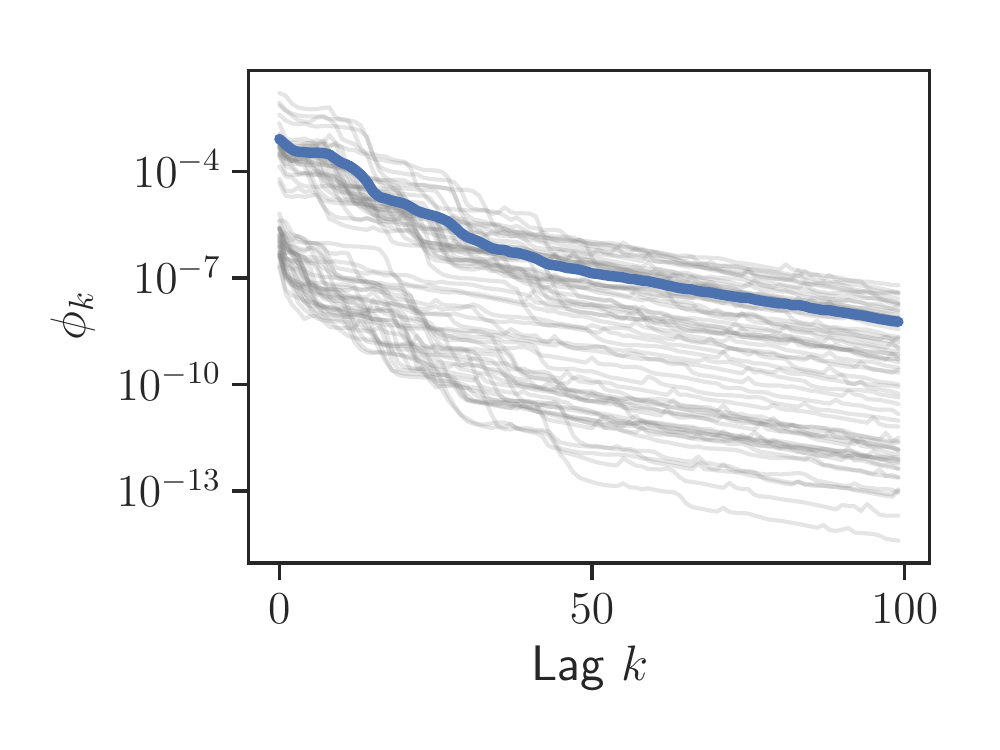}
\end{minipage}

\begin{minipage}[c]{.23\textwidth}
    \centering
    \includegraphics[width=\textwidth]{./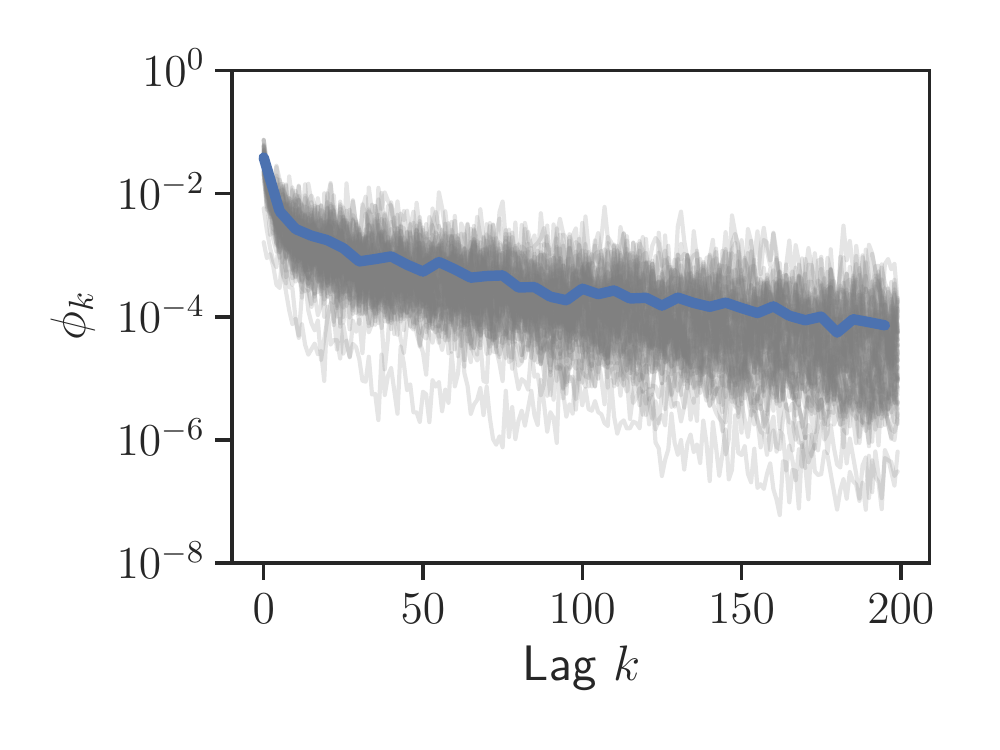}
\end{minipage}
\begin{minipage}[c]{.23\textwidth}
    \centering
    \includegraphics[width=\textwidth]{./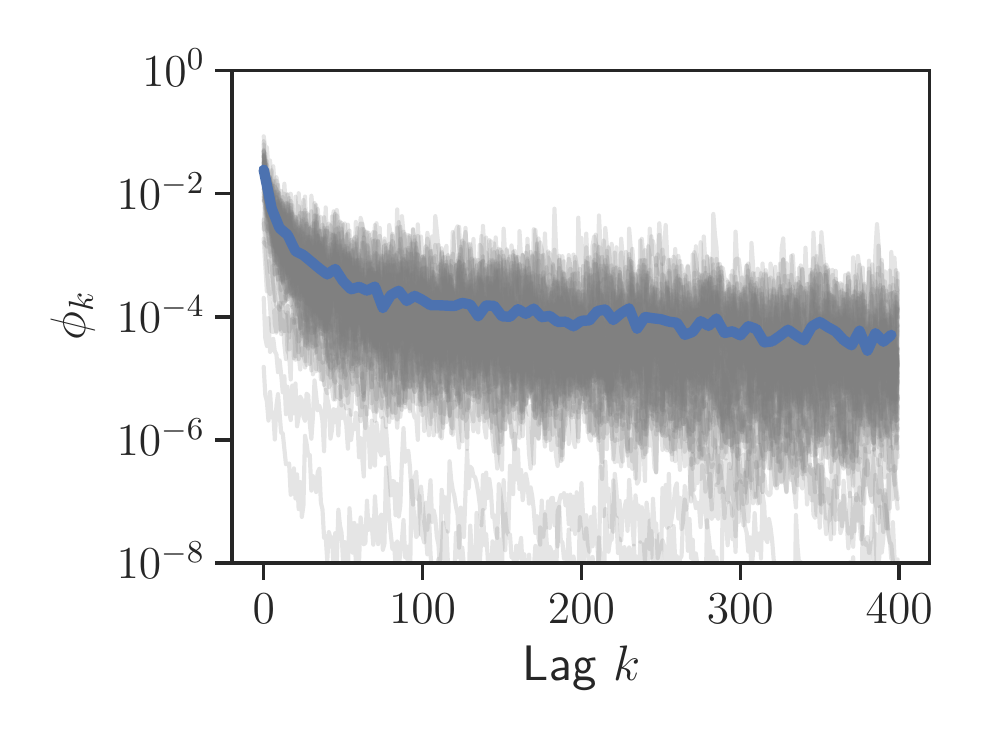}
\end{minipage}
\caption{Gradient Norms $\phi_{k}$ vs $k$ at the best $\theta$ showing geometric decay \emph{in expectation} (blue line) for large $k$. The gray lines are separate draws of $\phi_k$. (Top-left) fixed-length `copy' task, (top-right) variable-length `copy' task, (bottom-left) PTB, (bottom-right) Wiki2.}
\label{fig:grad_decay}
\end{figure}

Figure~\ref{fig:grad_decay} plots the gradient norm $\phi_k = \partial \mathcal{L}_s / \partial h_{s-k}$ vs $k$ evaluated at the best $\theta_n$ (as measured on the validation set).
Note that the $y$-axis is on a log-scale
We see that the expected norm $\empE[\phi_k]$ (blue-line) of the gradients decay geometrically for large $k$;
however any individual $\phi_{k}$ (gray lines) are quite noisy and do not strictly decay.
Therefore it appears that our RNNs satisfy \ref{assump:geometric_decay}, even though they are unstable, and thus the relative bias can be bounded.

\subsection{Challenges in higher dimensions}
During our experiments,
we found that when training RNNs with high-dimensional $h$, but without introducing regularization on $\theta$ (in the form of dropout or weight decay),
our estimates $\hat\beta$ were often close to or greater than $1$;
therefore our conservative relative error bound lead to extremely large (impractical) truncation estimates $K$.
During inspection, we found that although most dimensions of $\frac{\partial \mathcal{L}_s}{\partial h_{s-k}}$ decay rapidly with $k$,
a few dimensions did not and these dimensions cause the overall norm $\|\frac{\partial \mathcal{L}_s}{\partial h_{s-k}}\|$ to decay slowly, thus $\hat{\beta} \approx 1$.
However if these dimensions do not influence $\partial \mathcal{L}/\partial \theta$ (i.e. if $\frac{\partial h_t}{\partial \theta}$ is close to zero), then these dimensions should be ignored.
Therefore, to better apply our results to higher-dimensional $h$,
we suspect one should replace the Euclidean norm with a norm that weights dimensions of $\frac{\partial \mathcal{L}_s}{\partial h_{s-t}}$ by $\frac{\partial h_t}{\partial \theta}$ (such as the Mahalanobis norm $\| x \|_\Sigma = x^T\Sigma^{-1}x$ for some positive definite matrix $\Sigma$), but we leave this for future work.

\section{Discussion}
\label{sec:discussion}
In this work, we developed an adaptively truncating BPTT scheme for RNNs that satistify a generalized vanishing gradient property.
We show that if the gradient decays geometrically \emph{in expectation} \ref{assump:geometric_decay}, then we can control the relative bias of TBPTT (Theorem~\ref{thm:bias_bound}) and guarantee non-asymptotic convergence bounds for SGD (Theorem~\ref{thm:bias_sgd}).
We additionally show how to take advantage of these ideas in practice in Algorithm~\ref{alg:adaptive_tbptt}, by developing estimators for the relative bias based on backpropagated gradients.
We evaluate our proposed method on synthetic copy tasks and language modeling and find it performs similarly to the best fixed-$K$ TBPTT schemes, while still controlling the relative bias of the gradient estimates.
In future work, we are interested in methods that restrict the parameters to $\Theta_{\beta,\tau}$ and alternatives to the Euclidean norm for our error bounds in Section~\ref{sec:tbpttbiasbounds}.

\subsection*{Acknowledgements}
We thank the reviewers for their feedback.
We also thank members of the Dynamode lab at UW for their helpful discussions.
This work was supported in part by ONR Grant N00014-18-1-2862 and NSF CAREER Award IIS-1350133.

\bibliography{references.bib}   
\bibliographystyle{plainnat}

\clearpage
\appendix
\onecolumn
\renewcommand{\theequation}{\Alph{section}.\arabic{equation}}
\setcounter{section}{1}
\numberwithin{equation}{section}
\numberwithin{figure}{section}
\numberwithin{table}{section}

\begin{center}
\LARGE \textbf{Supplementary Material}
\end{center}

We start the supplement from `B' to avoid confusion between equation numbering and assumption numbering in the main text for  $(A-\#)$.

\section{Proofs for Section~\ref{sec:theory}}
\subsection{Proof of Theorem~\ref{thm:bias_bound}}
The proof of Theorem~\ref{thm:bias_bound} consists of two part.
First we bound the absolute bias by $\mathcal{E}(K,\theta)$ using assumptions~\ref{assump:geometric_decay} and~\ref{assump:bound_hiddenstate_gradients}.
Then we bound the relative bias using the triangle inequality
\begin{proof}[Proof of Theorem~\ref{thm:bias_bound}]
The bias of $\hat{g}_K$ is bounded by the expected error between $\hat{g}_K$ and $\hat{g}_T$
\begin{equation}
\label{eq:sm:1}
\| \E[\hat{g}_K(\theta)] - g(\theta) \| = 
\| \E[\hat{g}_K(\theta) - \hat{g}_T(\theta)] \| \leq 
\E \left[\| \hat{g}_K(\theta) - \hat{g}_T(\theta) \| \right] \enspace.
\end{equation}

Applying the triangle-inequality to the difference between $\hat{g}_K$ and $\hat{g}_T$ gives
\begin{equation}
\label{eq:sm:2}
\|\hat{g}_K(\theta) - \hat{g}_T(\theta)\| =
\left\|\sum_{k = K+1}^s \frac{\partial \mathcal{L}_s}{\partial h_{s-k}} \cdot \frac{\partial h_{s-k}}{\partial \theta} \right\|\leq 
\sum_{k = K+1}^s \left\|\frac{\partial \mathcal{L}_s}{\partial h_{s-k}}\right\| \cdot \left\| \frac{\partial h_{s-k}}{\partial \theta} \right\| \leq   
\sum_{k = K+1}^s \left\|\frac{\partial \mathcal{L}_s}{\partial h_{s-k}}\right\| \cdot M
\enspace,
\end{equation}
where in the last inequality we apply the assumption~\ref{assump:bound_hiddenstate_gradients}, $\|\partial h_t / \partial \theta\| < M$ for all $t$.
Taking the expectation with respect to $s$ of both sides of Eq.~\eqref{eq:sm:2} gives
\begin{equation}
\label{eq:sm:3}
\E \left[\| \hat{g}_K(\theta) - \hat{g}_T(\theta) \| \right] \leq
\sum_{k = K+1}^s \E\left\|\frac{\partial \mathcal{L}_s}{\partial h_{s-k}}\right\| \cdot M = 
\sum_{k = K+1}^s \E[\phi_{k}] \cdot M \enspace,
\end{equation}
where we recall that $\phi_{k} = \|\partial \mathcal{L}_s/\partial h_{s-k}\|$.

Recursively applying assumption~\ref{assump:bound_hiddenstate_gradients} to $\E[\phi_{\tau+t}]$ gives 
\begin{equation}
\label{eq:sm:4}
\E[\phi_{\tau+t}] \leq \beta \cdot \E[\phi_{\tau+t-1}] \leq \ldots \leq \beta^t \cdot \E[\phi_{\tau}] \enspace.
\end{equation}

Combining Eqs.~\eqref{eq:sm:1}, \eqref{eq:sm:3}, and \eqref{eq:sm:4} gives us the first half of the result
\begin{equation}
\| \E[\hat{g}_K(\theta)] - g(\theta) \| \leq 
\E \left[\| \hat{g}_K(\theta) - \hat{g}_T(\theta) \| \right] \leq
\sum_{k = K+1}^s \E_[\phi_{k}] \cdot M = \mathcal{E}(K, \theta) \enspace.
\end{equation}

To bound the relative error, we apply the reverse triangle inequality to $\| g(\theta)\|$
\begin{equation}
\| g(\theta) \| \geq \| \E[\hat{g}_K(\theta)] \| - \| \E[\hat{g}_K(\theta)] - g(\theta) \| \geq  \| \E[\hat{g}_K(\theta)] \| - \mathcal{E}(K, \theta)\enspace,
\end{equation}
when $\| \E[\hat{g}_K(\theta)] \| - \mathcal{E}(K, \theta) > 0$.

Since $\mathcal{E}(K,\theta)$ is an upper bound for the numerator and
$\| \E[\hat{g}_K(\theta)] \| - \mathcal{E}(K, \theta)$ is a lower bound for the denominator, we obtain the result
\begin{equation}
\frac{\| \E[\hat{g}_K(\theta)] - g(\theta) \|}{\|g(\theta)\|} \leq \frac{\mathcal{E}(K, \theta)}{\| \E[\hat{g}_K(\theta)] \| - \mathcal{E}(K, \theta)} = \delta(K,\theta)\enspace.
\end{equation}
\end{proof}

\subsection{Proof of Theorem~\ref{thm:bias_sgd}}
Let $\langle x_1, x_2 \rangle$ denote the inner-product between two vectors.

We first presents some Lemmas involving $\hat{g}(\theta)$ and $g(\theta)$
when the gradient has bounded relative bias $\delta$.
\begin{lemma}
\label{lemma:innerprods}
If $\hat{g}(\theta)$ has bounded relative bias of $\delta$ then 
\begin{equation}
\E\langle g(\theta), \hat{g}(\theta) - g(\theta) \rangle \leq \delta \| g(\theta) \|^2
\enspace \text{ and } \enspace
\E\langle g(\theta), \hat{g}(\theta) \rangle \geq (1-\delta) \| g(\theta) \|^2
\end{equation}
\end{lemma}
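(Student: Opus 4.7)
The plan is to prove both inequalities via linearity of expectation followed by a single application of the Cauchy--Schwarz inequality, leveraging the bounded relative bias hypothesis $\|\E[\hat{g}(\theta)] - g(\theta)\| \leq \delta \|g(\theta)\|$.

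For the first inequality, I would start by pulling $g(\theta)$ out of the expectation (since it is deterministic given $\theta$) to rewrite
\[
\E\langle g(\theta), \hat{g}(\theta) - g(\theta) \rangle = \langle g(\theta), \E[\hat{g}(\theta)] - g(\theta) \rangle.
\]
Then Cauchy--Schwarz gives $\langle g(\theta), \E[\hat{g}(\theta)] - g(\theta) \rangle \leq \|g(\theta)\| \cdot \|\E[\hat{g}(\theta)] - g(\theta)\|$, and the relative bias bound finishes the job with $\|\E[\hat{g}(\theta)] - g(\theta)\| \leq \delta\|g(\theta)\|$.

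For the second inequality, I would decompose $\hat{g}(\theta) = g(\theta) + (\hat{g}(\theta) - g(\theta))$ inside the inner product and use linearity to get
\[
\E\langle g(\theta), \hat{g}(\theta) \rangle = \|g(\theta)\|^2 + \E\langle g(\theta), \hat{g}(\theta) - g(\theta) \rangle.
\]
Then I would apply Cauchy--Schwarz in the lower-bound direction to the residual term, namely $\E\langle g(\theta), \hat{g}(\theta) - g(\theta) \rangle \geq -\|g(\theta)\| \cdot \|\E[\hat{g}(\theta)] - g(\theta)\| \geq -\delta \|g(\theta)\|^2$, and combine to obtain $(1-\delta)\|g(\theta)\|^2$.

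There is no real obstacle here; the lemma is essentially a one-line consequence of Cauchy--Schwarz once expectations are moved inside the inner product, and it serves as the algebraic workhorse for the descent step in the proof of Theorem~\ref{thm:bias_sgd}. The only minor care needed is to note that $g(\theta)$ is non-random so $\E\langle g(\theta), X\rangle = \langle g(\theta), \E X\rangle$, which makes the bias hypothesis directly applicable.
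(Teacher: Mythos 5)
Your proof is correct and follows essentially the same route as the paper: pull $g(\theta)$ outside the expectation, apply Cauchy--Schwarz together with the relative-bias bound for the first inequality, and decompose $\hat{g} = g + (\hat{g}-g)$ for the second. If anything, your treatment of the second inequality is more careful than the paper's, which writes the chain with $\leq$ signs where the lower-bound direction of Cauchy--Schwarz (giving $\E\langle g, \hat{g}-g\rangle \geq -\delta\|g\|^2$) is what is actually needed.
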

\begin{proof}[Proof of Lemma~\ref{lemma:innerprods}]
The first inequality follows from the Cauchy-Schwartz inequality and bound on relative bias
\begin{equation}
\E\langle g(\theta), \hat{g}(\theta) - g(\theta) \rangle = 
\langle g(\theta), \E[\hat{g}(\theta)] -g(\theta) \rangle \leq 
\| g(\theta) \| \| \E[\hat{g}(\theta) - g(\theta) \| \leq
\delta \| g(\theta) \|^2 \enspace.
\end{equation}
The second inequality follows immediately from the first
\begin{equation}
\langle g(\theta), \hat{g}(\theta) \langle = \langle g(\theta), g(\theta) \rangle + \langle g(\theta), \hat{g}(\theta) - g(\theta)\rangle \leq \|g(\theta)\|^2 - \delta \|g(\theta)\|^2 = (1-\delta) \|g(\theta)\|^2 \enspace.
\end{equation}
\end{proof}

The next lemma bounds the second moment of $\|\hat{g}(\theta)\|$.
\begin{lemma}
\label{lemma:bounded_second_moment}
If $\hat{g}$ has bounded relative bias $\delta$ and bounded variance $\sigma^2$ for all $\theta$ (assumption~\ref{assump:stoch_var_bound}), 
then
\begin{equation}
\E[\| \hat{g} \|^2] \leq (1+\delta)^2 \| g\|^2 + \sigma^2 \enspace. 
\end{equation}

\end{lemma}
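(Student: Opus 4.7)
The plan is to reduce the second-moment bound to the familiar bias-variance decomposition and then invoke the two hypotheses separately. First I would write
\begin{equation*}
\E\|\hat{g}\|^2 = \E\|\hat{g} - \E\hat{g}\|^2 + \|\E\hat{g}\|^2,
\end{equation*}
which holds because the cross term vanishes: $\E\langle \hat{g} - \E\hat{g}, \E\hat{g}\rangle = \langle \E[\hat{g} - \E\hat{g}], \E\hat{g}\rangle = 0$ since $\E\hat{g}$ is deterministic. This isolates the stochastic fluctuation from the deterministic mean, and the first term is controlled immediately by assumption \ref{assump:stoch_var_bound}, giving $\E\|\hat{g} - \E\hat{g}\|^2 \leq \sigma^2$.

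Next I would handle $\|\E\hat{g}\|^2$ by writing $\E\hat{g} = g + (\E\hat{g} - g)$ and applying the triangle inequality, so that $\|\E\hat{g}\| \leq \|g\| + \|\E\hat{g} - g\|$. The relative bias hypothesis then yields $\|\E\hat{g} - g\| \leq \delta \|g\|$, whence $\|\E\hat{g}\| \leq (1+\delta)\|g\|$, and squaring gives $\|\E\hat{g}\|^2 \leq (1+\delta)^2 \|g\|^2$. Combining this with the variance bound produces the claimed inequality $\E\|\hat{g}\|^2 \leq (1+\delta)^2 \|g\|^2 + \sigma^2$.

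There is essentially no hard step here: the argument is a one-line bias-variance split followed by a triangle-inequality application, and the two assumptions plug in cleanly. The only minor point worth being careful about is confirming that the cross term in the decomposition really does vanish (so that the inequality is not lossy in a way that would interact badly with the subsequent SGD convergence argument), but this is immediate from linearity of expectation. The bound is tight in the sense that it is attained (up to $\sigma^2$) when $\E\hat{g}$ points in the same direction as $g$ with magnitude $(1+\delta)\|g\|$, which is consistent with the worst-case relative-bias scenario one would want for the SGD analysis.
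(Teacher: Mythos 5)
Your proof is correct, and it takes a slightly different route from the paper's. You decompose the second moment around the mean, $\E\|\hat{g}\|^2 = \E\|\hat{g}-\E\hat{g}\|^2 + \|\E\hat{g}\|^2$, so the cross term vanishes identically, and then you control $\|\E\hat{g}\|$ by the triangle inequality and the relative-bias hypothesis to get $\|\E\hat{g}\| \leq (1+\delta)\|g\|$. The paper instead expands around the true gradient, $\|\hat{g}\|^2 = \|g\|^2 + 2\langle g, \hat{g}-g\rangle + \|\hat{g}-g\|^2$, bounds the cross term by $\delta\|g\|^2$ via Cauchy--Schwarz (this is the content of its Lemma~\ref{lemma:innerprods}), and splits $\E\|\hat{g}-g\|^2$ into squared bias plus variance; summing $\|g\|^2 + 2\delta\|g\|^2 + \delta^2\|g\|^2 + \sigma^2$ recovers the same $(1+\delta)^2\|g\|^2 + \sigma^2$. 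The two arguments are equally elementary and give the identical constant; yours is marginally cleaner in that it avoids the Cauchy--Schwarz step and does not rely on the inner-product lemma, while the paper's organization has the minor advantage of reusing a lemma it needs anyway for the descent inequality in Theorem~\ref{thm:bias_sgd}. Your closing remark about (near-)tightness when $\E\hat{g}$ is aligned with $g$ at magnitude $(1+\delta)\|g\|$ is also accurate.
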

\begin{proof}[Proof of Lemma~\ref{lemma:bounded_second_moment}]
\begin{equation}
\| \hat{g} \|^2 = \| g \|^2 + 2\langle g, \hat{g} - g \rangle + \| \hat{g} - g \|^2
\end{equation}
Take the expectation, we obtain the result
\begin{equation}
\E\| \hat{g} \|^2 = \| g \|^2 + 2\E\langle g, \hat{g} - g \rangle + \E\| \hat{g} - g \|^2 \enspace,
\end{equation}
where expand the mean-squared error into the bias squared plus variance
\begin{equation}
\E\| \hat{g} - g \|^2 = \|\E\hat{g} - g \|^2 + \E\| \hat{g} - \E\hat{g}\|^2 \leq \delta^2 \| g \|^2 = \sigma^2 \enspace.
\end{equation}
Therefore
\begin{equation}
\E\| \hat{g} \|^2 \leq \| g \|^2 + 2\delta \| g\|^2 + (\delta^2 \|g\|^2 + \sigma^2) = (1+\delta)^2 \| g \|^2 + \sigma^2 
\end{equation}
\end{proof}

We now begin the proof of Theorem~\ref{thm:bias_sgd} which builds off the proof in~\cite{ghadimi2013stochastic}.
\begin{proof}[Proof of Theorem~\ref{thm:bias_sgd}]
From the $L$-smoothness of $\mathcal{L}$, assumption~\ref{assump:smooth_loss}, we have
\begin{equation}
\mathcal{L}(\theta) - \mathcal{L}(\theta') - |\langle g(\theta), \theta - \theta'\rangle | \leq \frac{L}{2} \| \theta' - \theta \|^2, \quad \forall \theta, \theta' \enspace.
\end{equation}

Substituting $\theta = \theta_{n+1}$ and $\theta' = \theta_n$, where $\theta_{n+1}$ and $\theta_n$ are connected through SGD Eq.~\eqref{eq:sgd}, we obtain

\begin{align}
\mathcal{L}(\theta_{n+1}) 
&\leq
\mathcal{L}(\theta_n) + \langle g(\theta_n), \theta_{n+1} - \theta_n \rangle + \frac{L}{2} \| \theta_{n+1} - \theta_n \|^2 \\
&=
\mathcal{L}(\theta_n) - \gamma_n \langle g(\theta_n), \hat{g}(\theta_n) \rangle + \frac{L}{2} \gamma_n^2 \| \hat{g}(\theta_n) \|^2  \enspace.
\end{align}
Taking the expectation with respect to $\hat{g}(\theta_n)$ on both sides and using Lemmas~\ref{lemma:innerprods} and \ref{lemma:bounded_second_moment} gives us
\begin{align}
\E\mathcal{L}(\theta_{n+1}) 
&=
\mathcal{L}(\theta_n) - \gamma_n \E\langle g(\theta_n), \hat{g}(\theta_n) \rangle + \frac{L}{2} \gamma_n^2 \E\| \hat{g}(\theta_n) \|^2 \\
&\leq \mathcal{L}(\theta_n) - \gamma_n (1-\delta) \| g(\theta_n)\|^2 + \frac{L}{2} \gamma_n^2 ((1+\delta)^2 \| g(\theta_n) \|^2 + \sigma^2) \enspace.
\end{align}
Rearranging terms with $\gamma_n$ gives
\begin{equation}
\frac{\gamma_n (1-\delta)}{2} \left(2 - \gamma_n \frac{L(1+\delta)^2}{(1-\delta)}\right) \cdot \| g(\theta_n) \|^2 \leq \mathcal{L}(\theta_n) - \E\mathcal{L}(\theta_{n+1}) + \gamma_n^2 \frac{L \sigma^2}{2} \enspace.
\end{equation}
As we assume the stepsizes are $\gamma_n < \frac{1-\delta}{L(1+\delta)^2}$, therefore $(2 + \gamma_n \frac{L(1+\delta)^2}{(1-\delta)}) < 1$ and we can drop these terms.
Taking the summation over $n$ and taking the expectation with respect to $\hat{g}(\theta_n)$ for $n = 1,\ldots,N$ we obtain
\begin{equation}
\sum_{n = 1}^N \gamma_n \frac{(1-\delta)}{2}  \cdot \min_{n \in [1,N+1]} \| g(\theta_n) \|^2 \leq \mathcal{L}(\theta_1) - \E\mathcal{L}(\theta_{N+1}) + \sum_{n = 1}^N \gamma_n^2 \frac{L \sigma^2}{2} \enspace.
\end{equation}
Finally, we divide both sides by $\sum_n \gamma_n\frac{1-\delta}{2}$ and apply $\E\mathcal{L}(\theta_{N+1}) \geq \min_{\theta^*} \mathcal{L}(\theta^*)$ to obtain the result
\begin{equation}
\label{eq:sm:bound}
\min_{n \in [1,N+1]} \| g(\theta_n) \|^2 \leq \frac{ 2 D_{\mathcal{L}} +  L \sigma^2 \sum_{n = 1}^N \gamma_n^2}{ (1-\delta) \sum_{n = 1}^N \gamma_n} \enspace,
\end{equation}
where $D_{\mathcal{L}} = \mathcal{L}(\theta_1) - \min_{\theta^*} \mathcal{L}(\theta^*)$.

If we use a constant stepsize $\gamma_n = \gamma$ for all $n \in [1,N]$, then the optimal stepsize for $N$ steps of SGD is
\begin{equation}
\gamma = \sqrt{\frac{2 D_{\mathcal{L}}}{NL\sigma^2}}  \quad \text{ which achieves } \enspace\min_{n \in [1,N+1]} \| g(\theta_n) \|^2 \leq \frac{1}{1-\delta} \cdot \sqrt{\frac{8 D_\mathcal{L} L \sigma^2}{N}} \enspace.
\end{equation}

If instead a decaying $\mathcal{O}(n^{-1/2})$ stepsize is used, then
the numerator of Eq.~\eqref{eq:sm:bound} grows as a harmonic series $\mathcal{O}(\sum_n n^{-1}) = \mathcal{O}(\log n)$, while the denominator grows $\mathcal{O}(\sum_n n^{-1/2}) = \mathcal{O}(n^{1/2})$.
Therefore the overall rate is $\mathcal{O}(n^{-1/2} \log n)$.
\end{proof}

\subsection{Comparison of Bounds to~\citep{chen2018stochastic}}

In Section~\ref{sec:biased_sgd_bound} for Theorem~\ref{thm:bias_sgd}, we assume the \emph{relative bias} is bounded, that is $\| \E[\hat{g}(\theta)] - g(\theta) \| \leq \delta \| g(\theta) \|$ for all $\theta$ (Eq.~\eqref{eq:bounded_rel_bias}).
\citet{chen2018stochastic} prove similar results to Theorem~\ref{thm:bias_sgd}, where they assume the relative error of each gradient is bounded in high probability, that is
there exists $\delta, \epsilon > 0$ such that
\begin{equation}
\label{eq:high_prob_bound}
\Pr( \| \hat{g}(\theta) - g(\theta) \| \leq \delta \| g(\theta) \|) > 1-\epsilon \enspace, \text{ for all } \theta \enspace.
\end{equation}
Although
Markov's inequality implies that if the relative bias is bounded by $\delta \cdot \epsilon$, when Eq.~\eqref{eq:high_prob_bound} holds for $\delta, \epsilon$, their non-convex optimization results only hold in high probability rather than uniformly.
A key drawback of their results, is that the relative error must be bounded in high probability for all steps of SGD $(\hat{g}_{1:N})$;
therefore the required $\epsilon$ for each step depends on the total number of SGD steps during training~\citep[see][Eq.(7) and Theorem 5]{chen2018stochastic}.
Specifically, \citet{chen2018stochastic} observe that the probability the relative error is controlled for all $N$ steps is bounded by $1-\epsilon_{\text{total}} \leq (1-\epsilon)^N$ under the additional assumption that the noise in $\hat{g}(\theta)$ is independent.
For their results to hold with probability $1 -\epsilon_{\text{total}}$ after $N$ steps, each gradient must have a relative error bound with $\epsilon \leq 1 - (1-\epsilon_{\text{total}})^{1/N}$.
\citet{chen2018stochastic} achieve this by restricting $\epsilon \leq \epsilon_{\text{total}}/N$.
Our result assumes the relative error is bounded in expectation, which sides steps this issue.
However our results are not as robust in the sense that they do not hold if the noise in $\hat{g}(\theta)$ does not have an expected value (e.g. if $\hat{g}(\theta) - g(\theta)$ is Cauchy).

\section{Additional Experiments}
This section provides additional tables and figures for the experiments in Section~\ref{sec:exp} as well as results on time series prediction with temporal point processes.

In our experiments, we selected the stepsize $\gamma$ for SGD by performing a grid search over powers of 10 and selected the largest stepsize that did not diverge for fixed TBPTT (with $K = 15$ for the synthetic tasks, $K=100$ for the language modeling tasks, and $K=6$ for the temporal point process tasks).
We also consider adaptive and decaying stepsizes (specifically ADADELTA, SGD with Momentum, and epoch-wise stepsize decay); however, we did not see a significant difference in results.

\subsection{Synthetic `Copy' Experiment}
Figure~\ref{fig:supp_copy} shows the validation PPL for the two experiments in Section~\ref{sec:exp-copy}.
The left pair of figures show the validation PPL while the right pair shows the cumulative minimum (i.e. the `best') validation PPL.
The test PPL plots in Figure~\ref{fig:copy} are piecewise constant evaluated using these `best' validation PPL parameters.
The top row corresponds to the fixed-memory $m=10$ copy experiment, and we see the loss decays relatively smoothly.
The bottom row corresponds to the variable-memory $m\in[5,10]$ copy experiment, and we see heavy oscillation in the validation error as it decays.
Table~\ref{tab:copy} is a table of the test PPL results evaluated at the `best' validation PPL.
This table provides the numeric values of the `best' PPL values for Figures~\ref{fig:copy} and \ref{fig:supp_copy}.
We see that the adaptive TBPTT perform as well as or outperform the best fixed $K$ TBPTT.

\begin{figure}[H]
\centering
\begin{minipage}[c]{.32\textwidth}
    \centering
    \includegraphics[width=\textwidth]{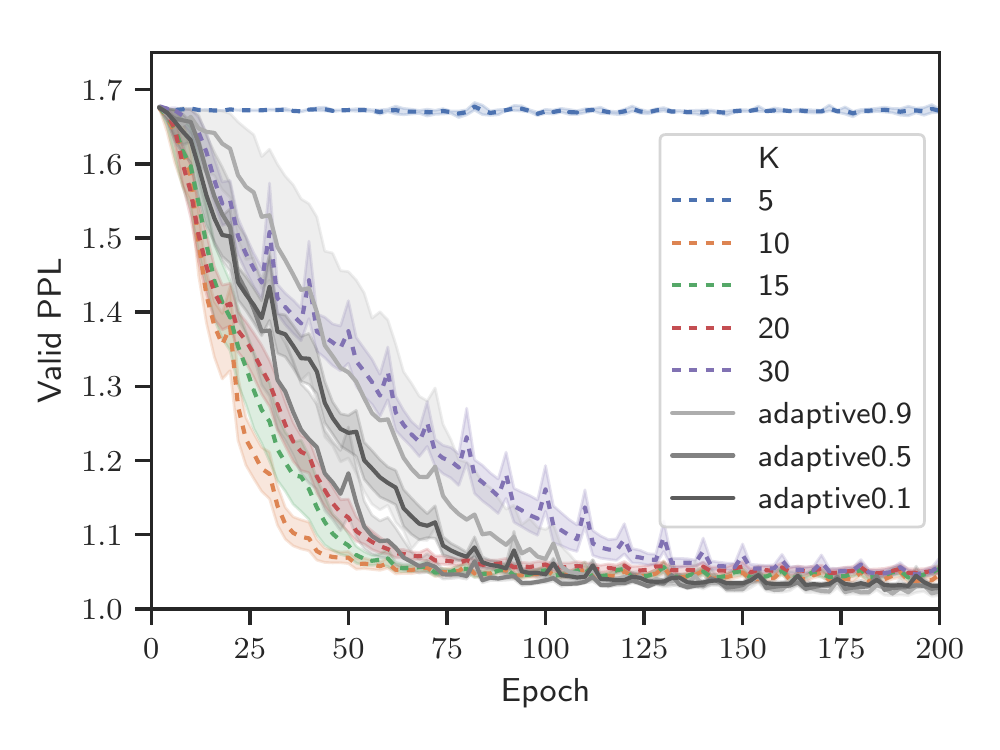}
\end{minipage}
\hspace{1em}
\begin{minipage}[c]{.32\textwidth}
    \centering
    \includegraphics[width=\textwidth]{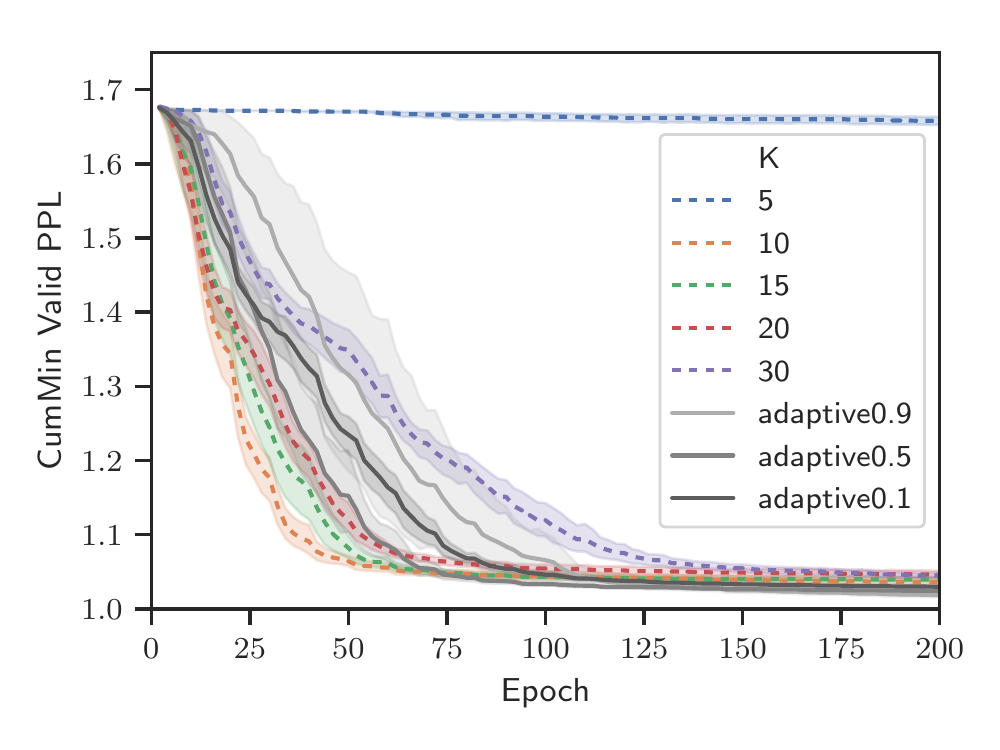}
\end{minipage}

\begin{minipage}[c]{.32\textwidth}
    \centering
    \includegraphics[width=\textwidth]{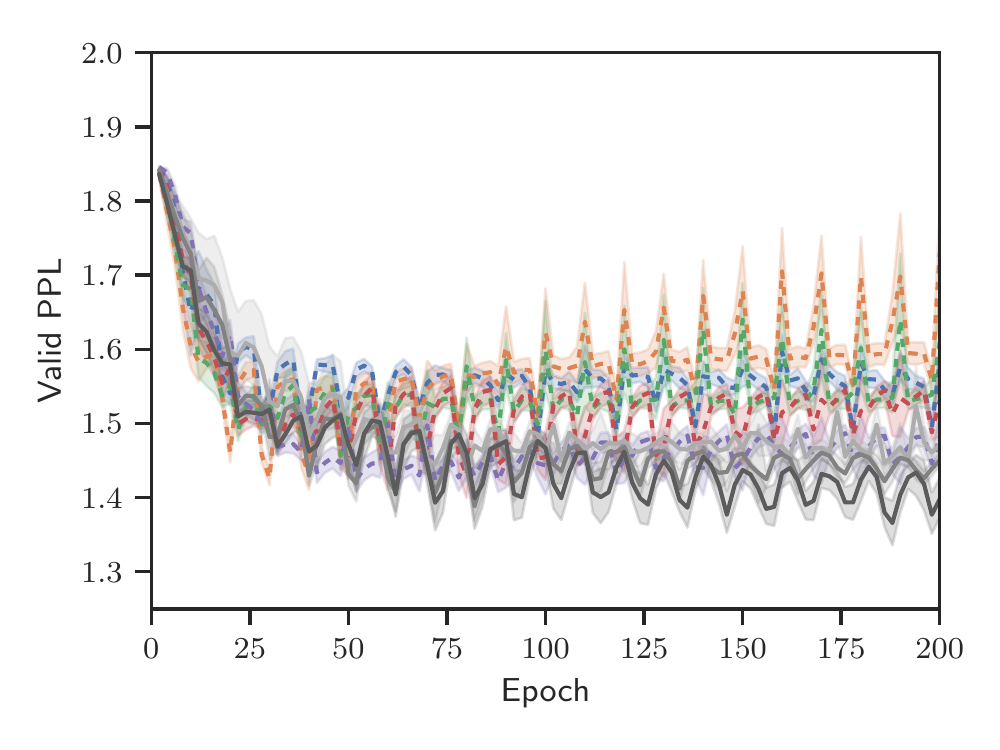}
\end{minipage}
\hspace{1em}
\begin{minipage}[c]{.32\textwidth}
    \centering
    \includegraphics[width=\textwidth]{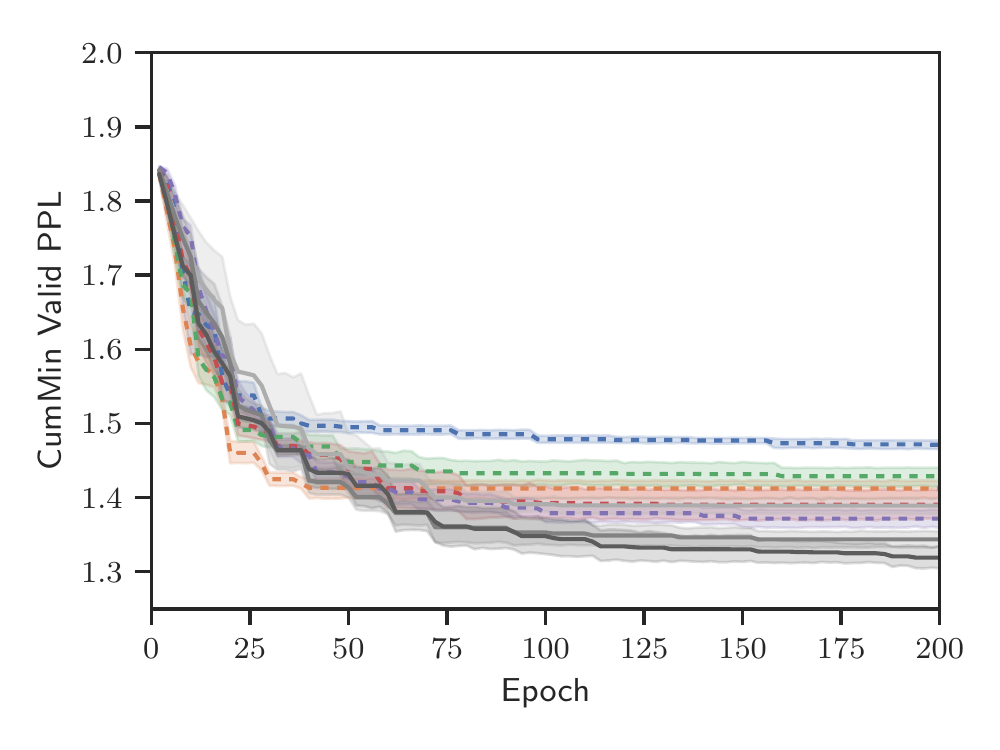}
\end{minipage}
\vspace{-1em}
\caption{Synthetic Copy Supplement: (left) Valid PPL vs epoch, (right) `Best' Valid PPL vs epoch (Top) fixed $m = 10$, (bottom) variable $m \in [5, 10]$.
Solid dark lines are our adaptive TBPTT methods, dashed colored lines are fixed TBPTT baselines.
}
\label{fig:supp_copy}
\end{figure}

\begin{table}[H]
\centering
\caption{Table of PPL for Synthetic Copy Experiments: (left) fixed $m = 10$, (right) variable $m \in [5, 10]$.
`Valid PPL' is the best validation set PPL.
`Test PPL' is the test set PPL at parameters of the best validation set PPL.
Standard deviation over multiple initializations are in parentheses.
}\label{tab:copy}
\vskip 0.15in
\begin{center}
\begin{small}
\begin{tabular}{l|rr}
\multicolumn{3}{c}{Fixed Copy $m = 10$} \\[0.1em]
\toprule
K              &   Valid PPL &   Test PPL \\
\midrule
5              & 1.655 (0.012) & 1.646 (0.012) \\
10             & 1.035 (0.007) & 1.036 (0.005) \\
15             & 1.038 (0.005) & 1.039 (0.003) \\
20             & 1.045 (0.009) & 1.040 (0.006) \\
30             & 1.044 (0.007) & 1.043 (0.004) \\
$\delta = 0.9$ & \textBF{1.018 (0.005)} & \textBF{1.022 (0.006)} \\
$\delta = 0.5$ & \textBF{1.024 (0.003)} & \textBF{1.027 (0.002)} \\
$\delta = 0.1$ & \textBF{1.029 (0.004)} & \textBF{1.030 (0.005)} \\
\bottomrule
\end{tabular}
\quad
\begin{tabular}{l|rr}
\multicolumn{3}{c}{Variable Copy $m \in [5,10]$} \\[0.1em]
\toprule
K              &   Valid PPL &   Test PPL \\
\midrule
5              & 1.46 (0.01) & 1.47 (0.01) \\
10             & 1.41 (0.02) & 1.39 (0.02) \\
15             & 1.39 (0.03) & 1.37 (0.03) \\
20             & 1.39 (0.03) & 1.35 (0.03) \\
30             & \textBF{1.33 (0.02)} & \textBF{1.31 (0.01)} \\
$\delta = 0.9$ & 1.37 (0.02) & 1.35 (0.02) \\
$\delta = 0.5$ & \textBF{1.33 (0.01)} & \textBF{1.32 (0.02)} \\
$\delta = 0.1$ & \textBF{1.31 (0.01)} & \textBF{1.29 (0.01)} \\
\bottomrule
\end{tabular}
\end{small}
\end{center}
\end{table}

\subsection{Language Modeling Experiment}
Figure~\ref{fig:supp_lm} shows the validation PPL for the two language modeling experiments in Section~\ref{sec:exp-lm}.
The left pair of figures show the validation PPL while the right pair shows the cumulative minimum (i.e. the `best') validation PPL.
The top row corresponds to the PTB experiment.
We see that fixed TBPTT with small $K$ quickly begins to over-fit (as the validation PPL increases).
With larger $K$, fixed TBPTT achieves lower validation (and test) PPL, but requires more epochs.
We see that the adaptive TBPTT with $\delta = 0.1$, achieves a better PPL much more rapidly.
The bottom row corresponds to Wiki2 experiment, where we see that the adaptive TBPTT and best fixed TBPTT method perform similarly.
Table~\ref{tab:lm} is a table of the test PPL results evaluated at the `best' validation PPL.
This table provides the numeric values of the `best' PPL values for Figures~\ref{fig:lm_results} and \ref{fig:supp_lm}.
We see that the adaptive TBPTT perform as well as or outperform the best fixed $K$ TBPTT.

\begin{figure}[H]
\centering
\begin{minipage}[c]{.32\textwidth}
    \centering
    \includegraphics[width=\textwidth]{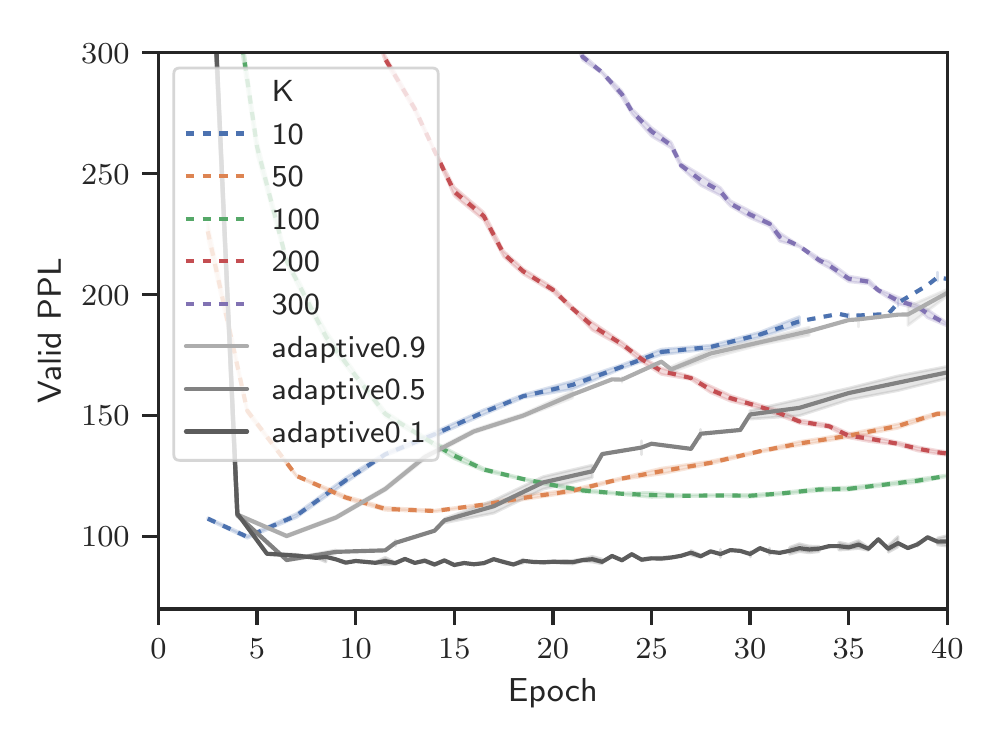}
\end{minipage}
\hspace{1em}
\begin{minipage}[c]{.32\textwidth}
    \centering
    \includegraphics[width=\textwidth]{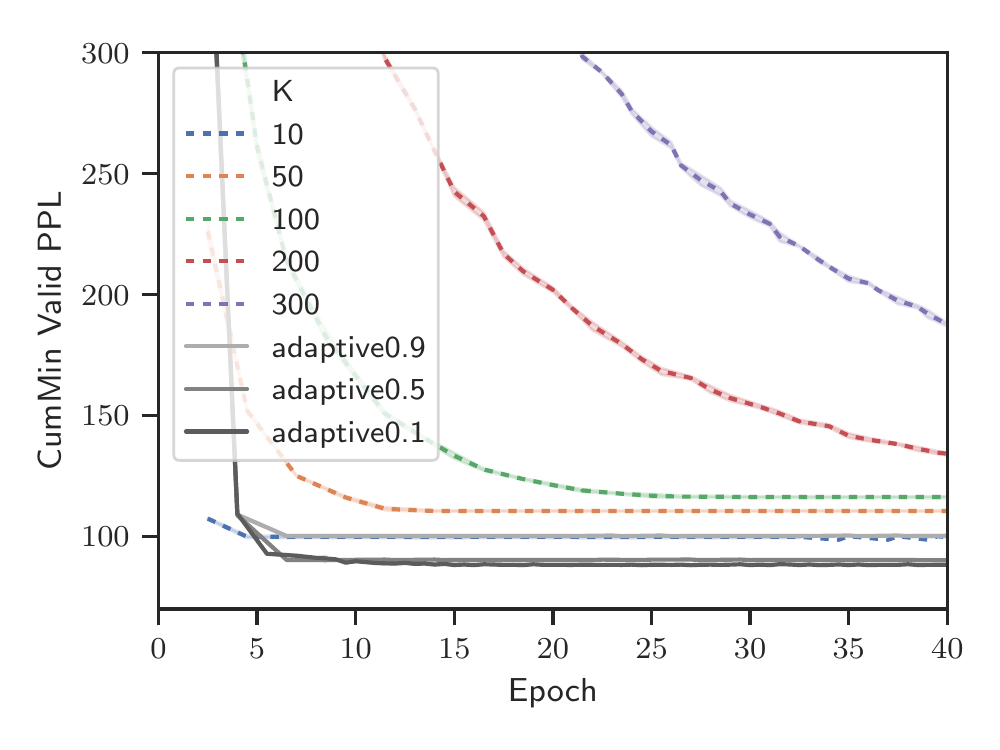}
\end{minipage}

\begin{minipage}[c]{.32\textwidth}
    \centering
    \includegraphics[width=\textwidth]{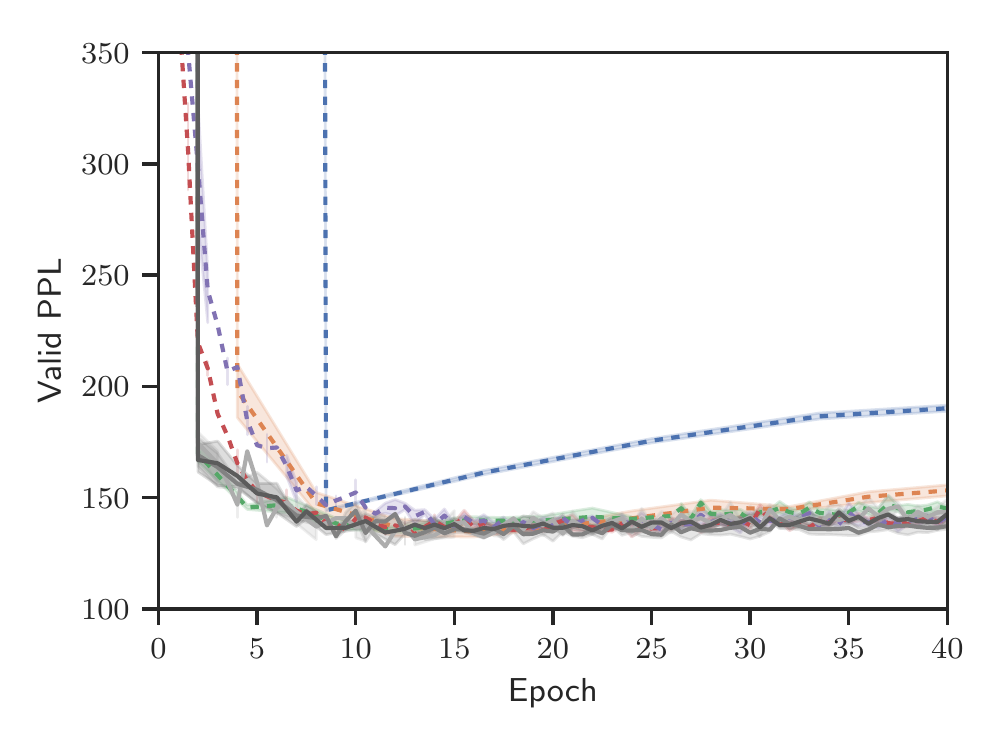}
\end{minipage}
\hspace{1em}
\begin{minipage}[c]{.32\textwidth}
    \centering
    \includegraphics[width=\textwidth]{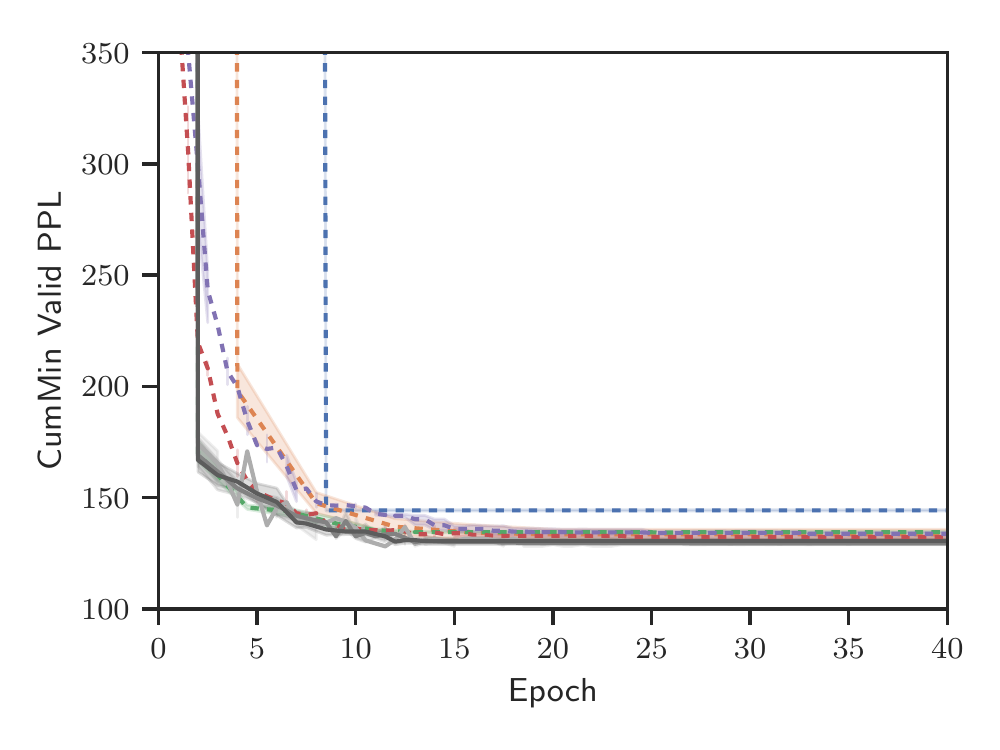}
\end{minipage}
\vspace{-1em}
\caption{Language Modeling Supplement: (left) Valid PPL vs epoch, (right) `Best' Valid PPL vs epoch. (Top) PTB, (bottom) Wiki2.
Solid dark lines are our adaptive TBPTT methods, dashed colored lines are fixed TBPTT baselines.
}
\label{fig:supp_lm}
\end{figure}
\begin{table}[H]
\caption{Table of PPL for Language Modeling experiments: (left) PTB, (right) Wiki2.
`Valid PPL' is the best validation set PPL.
`Test PPL' is the test set PPL at parameters of the best validation set PPL.
Standard deviation over multiple initializations are in parentheses.
}
\label{tab:lm}
\vskip 0.15in
\begin{center}
\begin{small}
\begin{tabular}{l|rr}
\multicolumn{3}{c}{PTB} \\[0.1em]
\toprule
K              &   Valid PPL &   Test PPL \\
\midrule
10             &   99.7 (0.6) &  99.9 (0.8) \\
50             &  110.4 (0.4) & 110.8 (0.8) \\
100            &  116.2 (0.5) & 116.9 (0.5) \\
200            &  125.2 (1.2) & 126.1 (0.9) \\
300            &  161.5 (0.5) & 161.2 (0.3) \\
$\delta = 0.9$ &  100.1 (0.5) &  99.0 (0.5) \\
$\delta = 0.5$ &   90.1 (0.4) &  89.5 (0.3) \\
$\delta = 0.1$ &   \textBF{88.1 (0.2)} &  \textBF{87.2 (0.2)} \\
\bottomrule
\end{tabular}
\quad
\begin{tabular}{l|rr}
\multicolumn{3}{c}{Wiki2} \\[0.1em]
\toprule
K              &   Valid PPL &   Test PPL \\
\midrule
10             &  144.2 (0.4) & 136.5 (1.3) \\
50             &  133.4 (2.9) & 127.2 (2.8) \\
100            &  134.4 (0.3) & 127.8 (0.5) \\
200            &  130.3 (1.1) & 124.6 (0.7) \\
300            &  \textBF{129.6 (1.4)} & \textBF{124.0 (2.2)} \\
$\delta = 0.9$ &  \textBF{130.0 (1.3)} & \textBF{124.1 (2.2)} \\
$\delta = 0.5$ &  \textBF{127.2 (0.7)} & \textBF{121.7 (0.6)} \\
$\delta = 0.1$ &  \textBF{127.5 (0.6)} & \textBF{121.9 (1.2)} \\
\bottomrule
\end{tabular}
\end{small}
\end{center}
\end{table}

\subsection{Temporal Point Process Estimation}
We now consider applying our adaptive TBPTT scheme to optimizing neural networks for temporal point prediction as in~\cite{du2016recurrent}.
Given a sequence $\{(y_i, t_i)_{i = 1}^N\}$ of categorical observations $y_i \in \mathcal{Y}$ and observation times $t_i \in \R$, the task consider by~\citep{du2016recurrent} is to predict $(y_i, t_i)$ given $(y_j, t_j)_{j < i}$.
Following~\citep{du2016recurrent}, we model the sequence using an RNN, with input embedding layers for $y_{i-1}$ and $t_{i-1}$, and two output prediction layers: one for $y_i$ and another for $\lambda(t)$ the \emph{conditional temporal point process intensity}.
The loss now consists of two terms, which define the negative log-likelihood (NLL) for a temporal point process: (i) cross entropy loss for $y_i$ and (ii) a temporal point process loss for $\lambda(t_i)$ given by Eq.(12) in~\cite{du2016recurrent}.
\citep{du2016recurrent} also evaluate the neural network model by measuring the zero-one loss of the predicted observation $\hat{y}_i$ to $y_i$ and the root mean-squared error (RMSE) of the mean predicted observation time $\hat{t}_i = \E[t_i | \lambda(t)]$ to $t_i$.

We fit such a model using a two-layer LSTM to the `Book Order' financial data used in~\cite{du2016recurrent}.
For the input layers, we use an embedding of size $128$ for the two state categorical observations $y$ and a two dimensional encoding of $t_i$ (i.e. $[t_i - t_{i-1}, t_i]$).
For the two-layer LSTM, we use a hidden and cell state dimension of size $128$.
And the output layer dimensions follow~\cite{du2016recurrent}.
For training, we use a batchsize of $S=64$ and a fixed learning rate of $\gamma = 0.1$ for SGD.
We compare gradients from fixed TBPTT $K \in [3,6,9,15,21]$ and our adaptive TBPTT method $\delta \in [0.9, 0.5, 0.1]$.
We set $W = 200$, $K_0 = 6$ and $[K_{\min}, K_{\max}] = [1,100]$ for Algorithm~\ref{alg:adaptive_tbptt}.

The `Book Order' dataset consists of the high-frequency financial transactions from the NYSE for a stock in one day. It consists of $0.7$ million transactions records (in milliseconds) and the possible actions $\mathcal{Y}$ are `to buy' or `to sell'.
We use the train-test split of~\cite{du2016recurrent} and split their test set in half to form a validation set.

The results of our experiment in Figures~\ref{fig:tpp_so} and~\ref{fig:supp_tpp_so} and Table~\ref{tab:tpp_so}.
From Figure~\ref{fig:tpp_so}(bottom center-right and bottom right) we see that the adaptive methods control for bias, by slowly increasing $K$.
From Figure~\ref{fig:tpp_so}(top right) and Table~\ref{tab:tpp_so}, we find that adaptive TBPTT methods achieve the best test set NLL.
We also see from Figure~\ref{fig:tpp_so}(bottom left and bottom center-left) and Table~\ref{tab:tpp_so} that fixed TBPTT $K = 3$ performs better at predicting $y_i$ at the cost of increased error in predicting $t_i$.
Similarly, fixed TBPTT $K = 15$ and $K=21$ are better at predicting $t_i$, but poorer at predicting $y_i$.

\begin{figure}[H]
\centering
\begin{minipage}[c]{.3\textwidth}
    \centering
    \includegraphics[width=\textwidth]{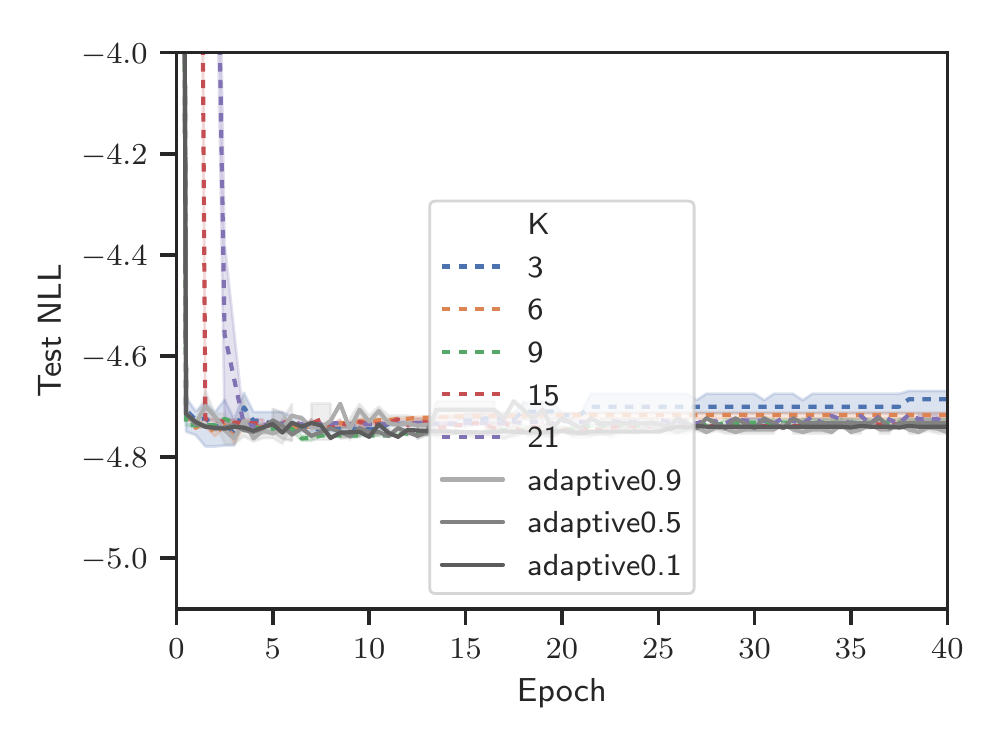}
\end{minipage}
\hspace{1em}
\begin{minipage}[c]{.3\textwidth}
    \centering
    \includegraphics[width=\textwidth]{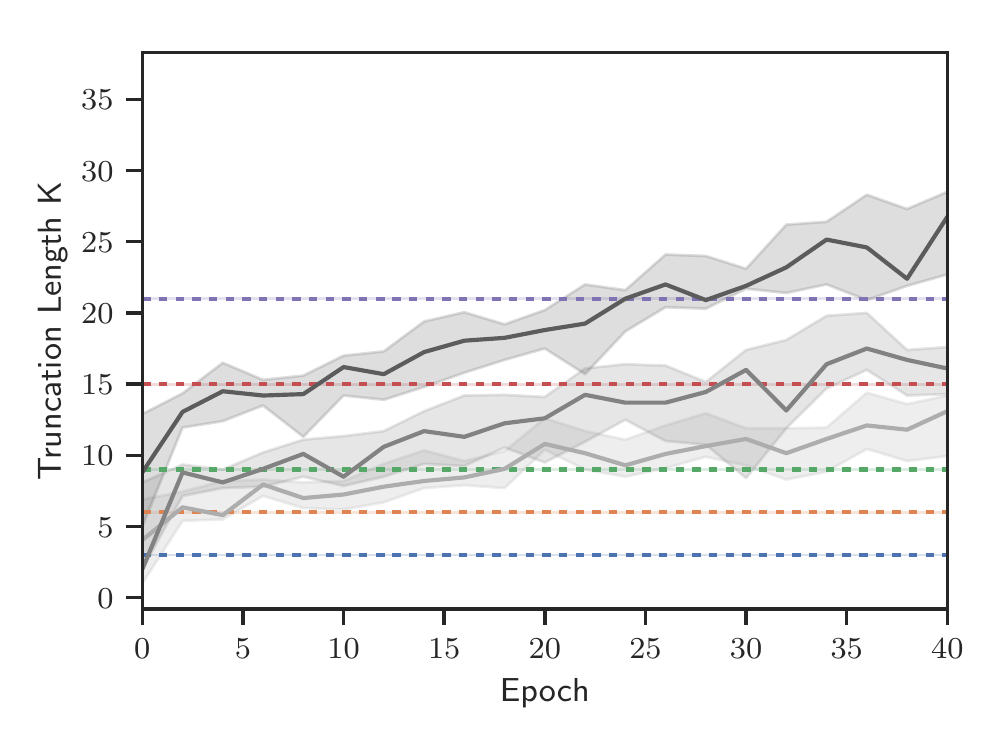}
\end{minipage}
\hspace{1em}
\begin{minipage}[c]{.3\textwidth}
    \centering
    \includegraphics[width=\textwidth]{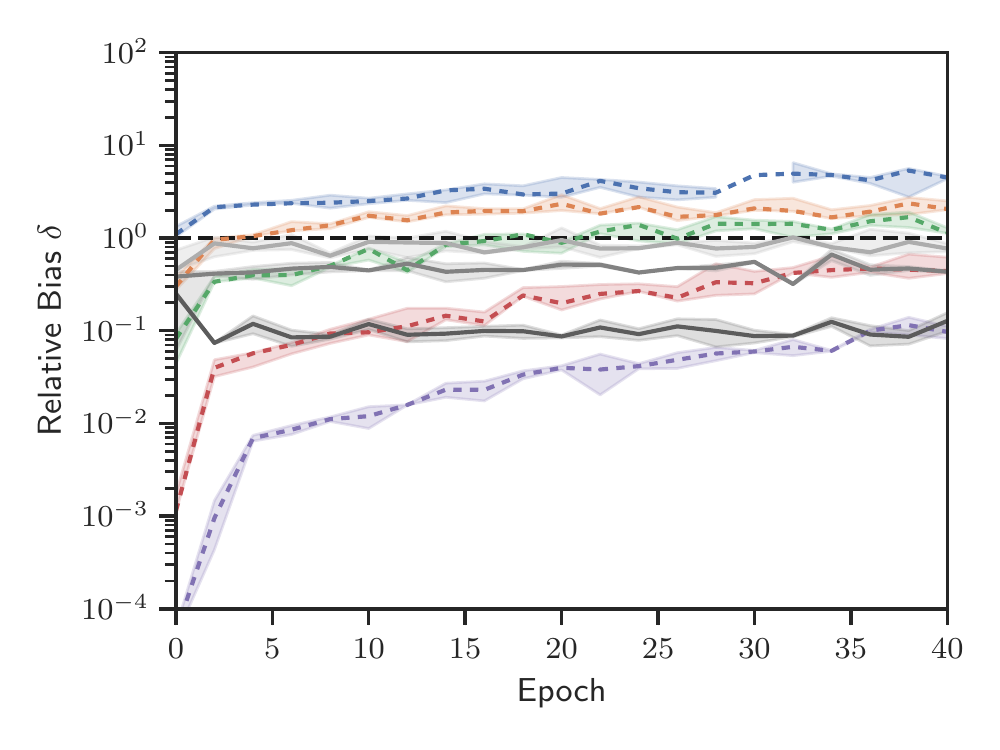}
\end{minipage}
\vspace{1em}
\begin{minipage}[c]{.3\textwidth}
    \centering
    \includegraphics[width=\textwidth]{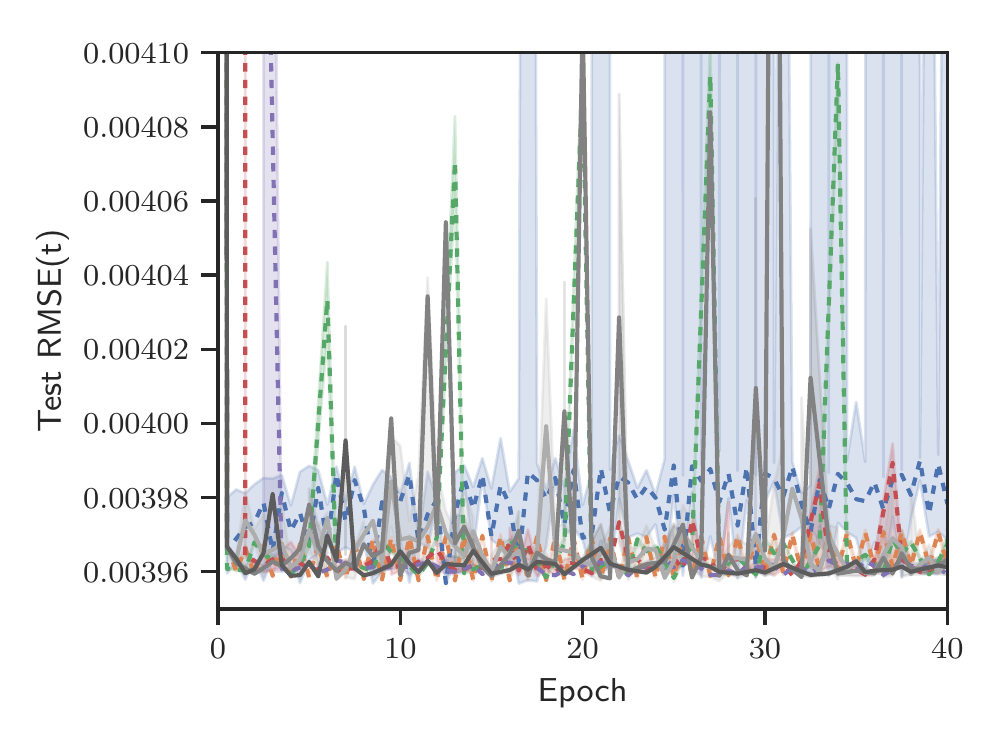}
\end{minipage}
\hspace{1em}
\begin{minipage}[c]{.3\textwidth}
    \centering
    \includegraphics[width=\textwidth]{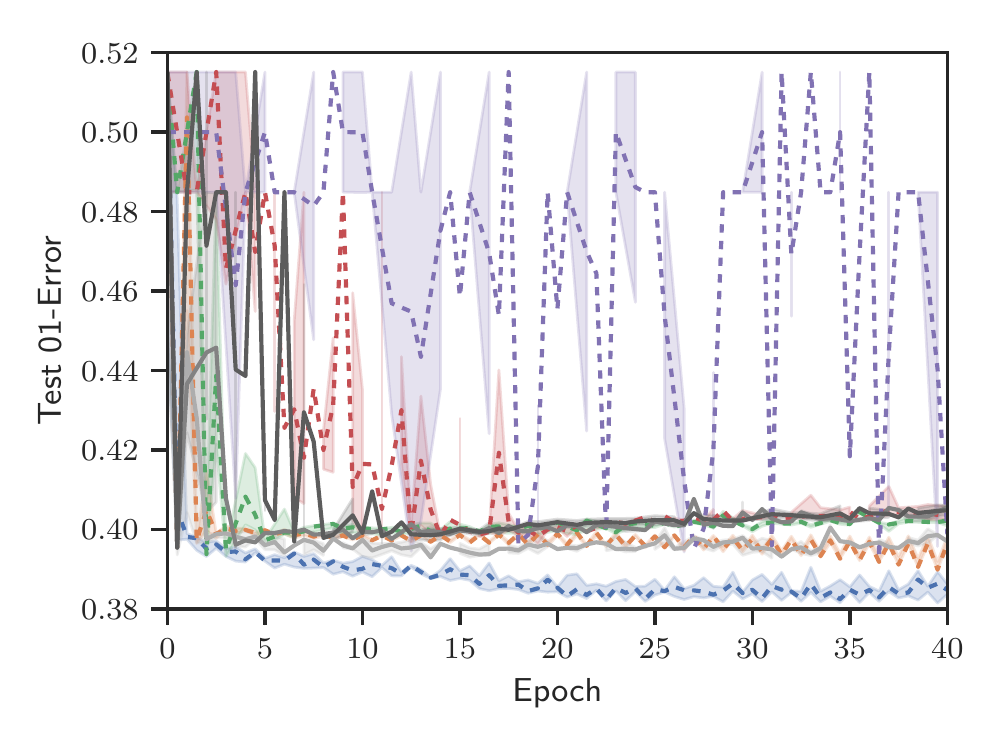}
\end{minipage}
\caption{Book Order Experiment. Top row: (left) Test NLL.
(center) truncation length $\hat{\kappa}(\delta, \theta_n)$,
(right) relative bias $\hat{\delta}(K)$.
Bottom row: (left) Test RMSE for $t$,
(right) Test 01-Error for $y$.
Solid dark lines are our adaptive TBPTT methods, dashed colored lines are fixed TBPTT baselines.
}
\label{fig:tpp_so}
\end{figure}

\begin{table}[H]
\caption{Table of metrics for Book Order experiment.
Test metrics are evaluated at the parameters of the best valdiation set NLL.
Standard deviation over multiple initializations are in parentheses.
}
\label{tab:tpp_so}
\vskip 0.15in
\begin{center}
\begin{small}
\begin{tabular}{l|rrrr}
\toprule
K              &   Valid NLL &   Test NLL &  RMSE($t$) $10^{-3}$ & 01-Loss($y$) \\
\midrule
3              &  -4.983 (0.013) & -4.694 (0.015) & 3.9705 (0.0016) & \textBF{0.3827 (0.0003)} \\
6              &  -4.905 (0.006) & -4.716 (0.006) & 3.9691 (0.0007) & 0.3959 (0.0009) \\
9              &  -4.898 (0.005) & \textBF{-4.732 (0.005)} & 3.9634 (0.0003) & 0.3944 (0.0011) \\
15             &  -4.875 (0.007) & \textBF{-4.734 (0.004)} & \textBF{3.9619 (0.0011)} & 0.3971 (0.0010) \\
21             &  \textBF{-4.831 (0.026)} & -4.719 (0.019) & \textBF{3.9622 (0.0001)} & 0.4316 (0.0336) \\
$\delta = 0.9$ &  -4.930 (0.016) & \textBF{-4.745 (0.009)} & 3.9641 (0.0007) & 0.3932 (0.0006) \\
$\delta = 0.5$ &  -4.890 (0.003) & \textBF{-4.733 (0.013)} & \textBF{3.9662 (0.0043)} & 0.3953 (0.0002) \\
$\delta = 0.1$ & \textBF{ -4.867 (0.001)} & \textBF{-4.739 (0.002)} & 3.9634 (0.0003) & 0.3954 (0.0001) \\
\bottomrule
\end{tabular}
\end{small}
\end{center}
\end{table}

\begin{figure}[H]
\centering
\begin{minipage}[c]{.3\textwidth}
    \centering
    \includegraphics[width=\textwidth]{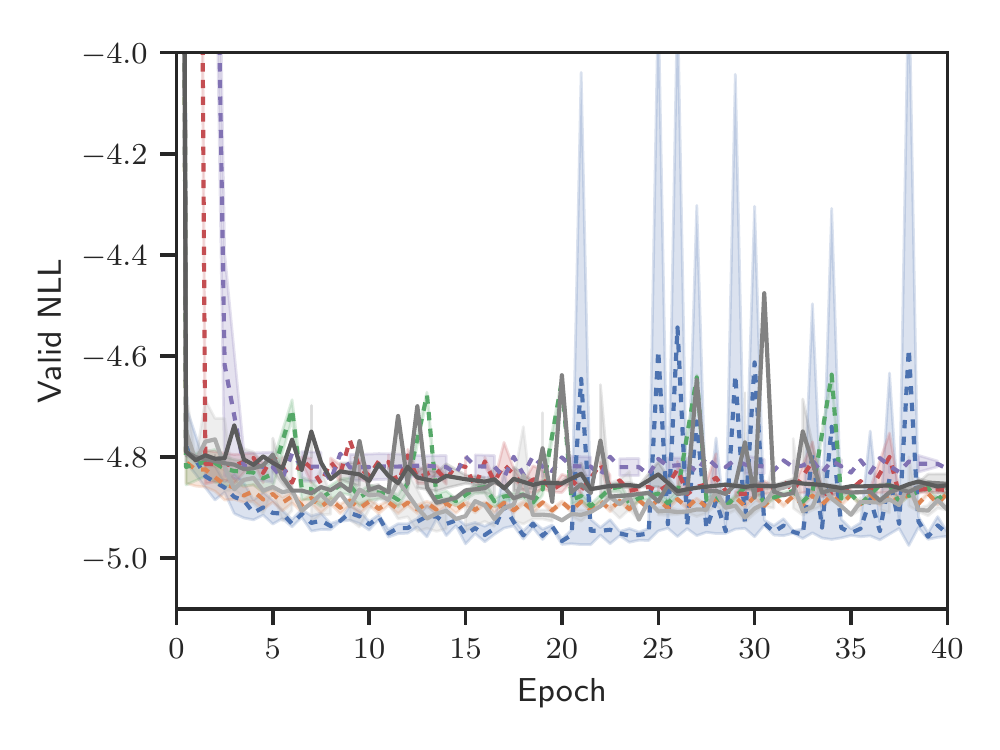}
\end{minipage}
\hspace{1em}
\begin{minipage}[c]{.3\textwidth}
    \centering
    \includegraphics[width=\textwidth]{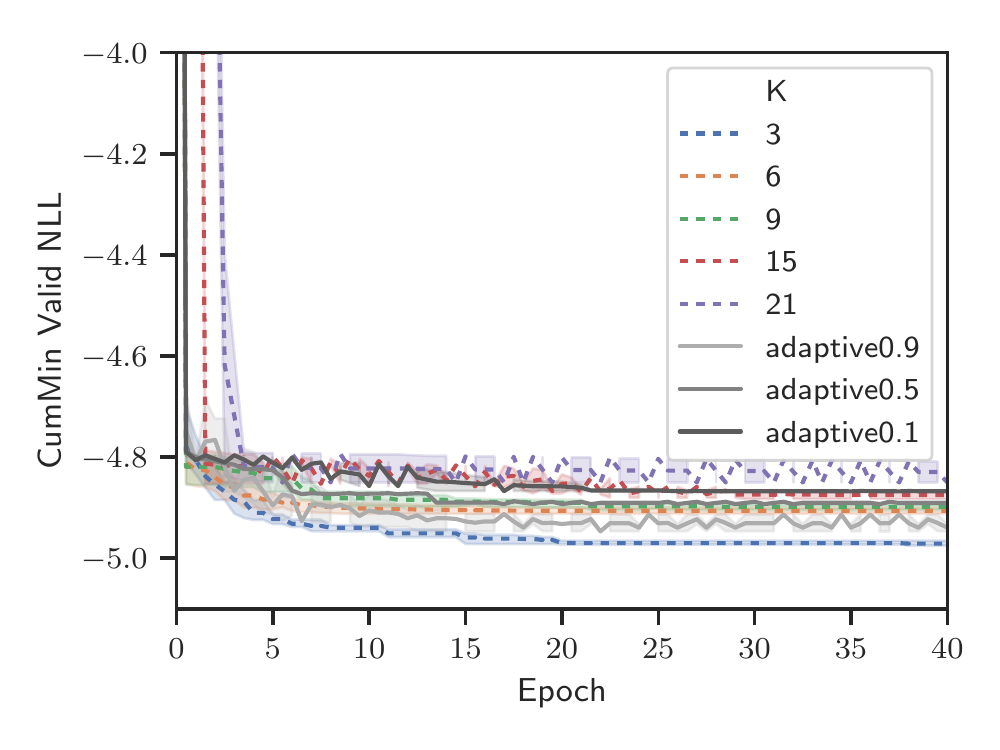}
\end{minipage}
\caption{Book Order Experiment: (left) Valid NLL, (right) `Best' Valid NLL.
Solid dark lines are our adaptive TBPTT methods, dashed colored lines are fixed TBPTT baselines.
}
\label{fig:supp_tpp_so}
\end{figure}

\end{document}